
\documentclass[letterpaper, journal]{IEEEtran}  

\usepackage{graphicx} 
\usepackage{float}    
\usepackage{verbatim} 
\usepackage{amsmath}  
\usepackage{amssymb}  
\usepackage{amsthm}
\usepackage[font=footnotesize]{caption}
\usepackage[ruled,vlined,linesnumbered]{algorithm2e}
\usepackage{multirow}
\usepackage{array}
\usepackage{booktabs}
\usepackage{hyperref}
\usepackage{gensymb}
\usepackage[dvipsnames]{xcolor}
\usepackage{subcaption}

\newtheorem{theorem}{Theorem}

\theoremstyle{definition}
\newtheorem{assumption}{Assumption}

\theoremstyle{definition}
\newtheorem{definition}{Definition}

\theoremstyle{definition}


\title{A Sequential Quadratic Programming Approach to the Solution of Open-Loop Generalized Nash Equilibria for Autonomous Racing}

\author{Edward L. Zhu and Francesco Borrelli
\thanks{Edward L. Zhu, and Francesco Borrelli {\{\tt\small edward.zhu, fborrelli\}@berkeley.edu} are with the Department of Mechanical Engineering at the University of California, Berkeley CA, USA}
}

\begin{document}

\maketitle
\thispagestyle{empty}
\pagestyle{empty}

\begin{abstract}

Dynamic games can be an effective approach for modeling interactive behavior between multiple competitive agents in autonomous racing and they provide a theoretical framework for simultaneous prediction and control in such scenarios. In this work, we propose DG-SQP, a numerical method for the solution of local generalized Nash equilibria (GNE) for open-loop general-sum dynamic games for agents with nonlinear dynamics and constraints. In particular, we formulate a sequential quadratic programming (SQP) approach which requires only the solution of a single convex quadratic program at each iteration. The three key elements of the method are a non-monotonic line search for solving the associated KKT equations, a merit function to handle zero sum costs, and a decaying regularization scheme for SQP step selection. We show that our method achieves linear convergence in the neighborhood of local GNE and demonstrate the effectiveness of the approach in the context of head-to-head car racing, where we show significant improvement in solver success rate when comparing against the state-of-the-art PATH solver for dynamic games. An implementation of our solver can be found at \url{https://github.com/zhu-edward/DGSQP}.

\end{abstract}

\begin{IEEEkeywords}
Game Theory, Generalized Nash Equilibrium, Sequential Quadratic Programming, Autonomous Racing.
\end{IEEEkeywords}

\section{Introduction}

Autonomous racing has steadily gained attention as an area of research due to the unique combination of challenges that it presents \cite{betz2022autonomous}. Like many widely studied autonomous navigation problems, racing scenarios typically involve multiple agents with no information sharing who are subject to coupling through safety related constraints. However, what sets racing apart is the aggressive and direct competition which necessitates performance at the limit (of speed, tire friction, thrust, etc.) in order to achieve victory against one's opponents. 
As such, we are faced with the challenge in prediction and planning for autonomous racing where, we need to solve for interactive multi-agent behavior arising from direct competition under shared constraints, while subject to performance limiting factors such as friction limits from non-linear tire models. 

Approaches to tackling these challenges fall broadly into data-driven and model-based approaches, which have both seen impressive successes. In \cite{wurman2022outracing}, autonomous agents were trained using deep reinforcement learning (DRL) in a video game environment and achieved superior performance in head-to-head races against human players. While this approach was certainly able to learn complex vehicle dynamics and highly expressive multi-agent interaction models, it required days of training and, as with most RL-based methods, is dependent on large quantities of data to reach super-human performance in a simulation environment. In \cite{kaufmann2023champion}, DRL was used to achieve super-human performance in real-world drone racing. However, due to the sparsity of close-proximity interactions in drone races, no interaction models were used in the solution. On the other hand, \cite{betz2023tum,jung2023autonomous} used model-based approaches in a predict-then-plan architecture to achieve real-world racing of full scale race cars at speeds of up to 270 km/h. In these works, an upstream prediction module provides behavior forecasts of the opponents, which are then treated as constant during the downstream planning phase when an action plan is formulated for the ego vehicle (EV) via optimal control techniques which leverage explicit nonlinear vehicle models. These approaches have the benefit of being highly transparent, which is crucial for deployment on expensive and safety critical hardware, but can be limited in expressiveness in terms of interactivity between the opponent predictions and ego plan. Our work seeks to address this limitation for model-based approaches by producing highly interactive solutions to the prediction and planning problem for autonomous racing while maintaining transparency through the use of explicit vehicle models. 

Recent work has posed the prediction and planning problem as an equilibrium finding problem for a dynamic game \cite{zhu2023sequential, spica2020real, schwarting2021stochastic}. These approaches search for equilibria over the joint trajectory space of the EV and its opponents, which allows for simultaneous prediction and planning, as they are not conditioned on a static prior EV plan and can in fact solve over the space of multi-agent interactions. 
Such methods are based on the theory of non-cooperative dynamic games \cite{bacsar1998dynamic} and have been applied to a wide variety of trajectory optimization tasks for multi-agent robotic systems \cite{laine2021multi, lecleach2020algames, fridovich2020efficient, liu2023learning, peters2023contingency, kavuncu2021potential}. Two solution or equilibrium concepts are common for dynamic games, namely the Stackelberg and Nash equilibria, which make different assumptions about the information structure of the game. A Stackelberg equilibrium can be found for a game with an explicit leader-follower hierarchy \cite{fisac2019hierarchical}, whereas a Nash equilibrium models the case when agents make their decisions simultaneously. Our work focuses on the selection of Nash equilibria, which we believe to be a good fit for modeling the behavior of ego-centric agents in racing scenarios where no \emph{a priori} structure is imposed on the order of the interactions and where agents can have zero-sum terms in their objective functions, which describe direct competition with their opponents. 

Motivated by the considerations above, this paper focuses on   finding generalized Nash equilibria (GNE) \cite{facchinei2010generalized} for open-loop dynamic games with state and input constraints with a numerically robust solver. 
We build on our previous conference paper \cite{zhu2023sequential} and present three main contributions. The first is the DG-SQP solver, which is a novel iterative approach for finding GNE of a discrete-time open-loop dynamic game based on sequential quadratic programming (SQP). In particular, the method is able to account for nonlinear game dynamics and constraints on both the game state and agent actions. The three key elements of the method are a non-monotonic line search for solving the associated KKT equations, a merit function to handle zero sum costs, and a decaying regularization scheme for the SQP step selection. The second contribution is a novel application of model predictive contouring control in the context of dynamic games which is used to approximate Frenet-frame kinematics for improved numerical robustness. Finally, we perform an extensive simulation study comparing the performance of our solver to the to the state-of-the-art PATH solver based on mixed complementary problems \cite{dirkse1995path}. We show comparable performance to the PATH solver, in terms of success rate, on dynamic games formulated using exact Frenet-frame kinematics and significant improvements when the approximate formulation is used. 

We remark that the broader objective of our research extends to leveraging these open-loop dynamic game solutions within feedback-based systems. While the focus of this paper is the robustness of the proposed open-loop GNE solver, solver robustness in itself is not the ultimate goal. Rather, it is critical property which allows for effective integration of the solver and/or its solutions into closed-loop systems. This transition from open to closed-loop application of GNE solutions from our proposed solver has previously appeared in \cite{zhu2023sequential}.

\section{Related Work}

\subsection{Numerical Solvers for Nash Equilibria of Dynamic Games}

For the solution of GNE for dynamic games, many numerical approaches have been proposed. \cite{fridovich2020efficient} takes a differential dynamic programming approach to obtain a linear quadratic approximation of the dynamic game and its associated feedback Nash equilibrium. This is built upon in \cite{schwarting2021stochastic} to address stochastic games over belief spaces, which takes into account the effect of noisy measurements on state estimation in the game dynamics. However, these approaches are unable to explicitly account for inequality constraints and instead must include them in the cost function via barrier functions. This can obfuscate the meaning of the game as the cost measures both performance and constraint violation. \cite{kavuncu2021potential,bhatt2023efficient} propose an approach for the subclass of potential dynamic games which was shown to be computationally efficient. However, by requiring that the dynamic game be described by a single potential function over all agents, we note that direct competition, in the form of zero-sum terms in the agents' objective functions, cannot be captured in this approach. \cite{spica2020real,wang2019gamea,wang2019gameb,wang2021game} formulate a method akin to block coordinate descent called Sensitivity Enhanced Iterative Best Response (SE-IBR) where the optimal control problems for each agent, which are coupled in the dynamic game setting, are decoupled but augmented with a term in the objective function which captures the sensitivity of the TV's solution w.r.t. the EV's solution. Agents then improve their solutions in a sequential manner while holding the behavior of all other agents fixed. It was shown that fixed points of this algorithm correspond to GNE. However, each iteration of the method requires the solution of the same number of optimization problems as there are agents and can be slow to converge in practice. In contrast, our proposed solver only requires the solution of a single convex optimization problem at each iteration. \cite{lecleach2020algames} proposes a solver based on the augmented Lagrangian method. The solver, called ALGAMES, shows good performance when compared to \cite{fridovich2020efficient}. However, we show in prior work that ALGAMES appears to struggle with convergence in the context of car racing where more complex dynamics and environments are introduced \cite{zhu2023sequential}. Our proposed solver is perhaps most similar to \cite{laine2023computation} which also leverages SQP for the computation of \emph{feedback} Nash equilibria, but does not investigate its local behavior. Compared to \cite{laine2023computation}, we also introduce an approximation scheme which improves solver convergence in racing scenarios. Finally, we note that the problem of finding GNE for open-loop dynamic games can be formulated as a mixed complementarity problem for which the PATH solver is the state-of-the-art \cite{dirkse1995path}. This solver was used to great effect in multiple recent works to compute GNE for multi-agent navigation tasks \cite{liu2023learning,peters2023contingency}. As such, the PATH solver will be the main target of our comparisons, where we show, through a numerical study, that our proposed solver achieves greater success rate in the context of head-to-head racing scenarios.

\subsection{Game-Theoretic Methods in Autonomous Racing}

Due to its competitive nature, autonomous racing has been a popular test bed for game-theoretic methods of prediction and planning and many of the previously mentioned methods have been applied in this context. \cite{schwarting2021stochastic} formulates the racing problem as a stochastic dynamic game which can take into account state and measurement uncertainty when solving for a Nash equilibrium. However, this approach is unable to explicitly incorporate safety-critical track boundary and collision avoidance constraints, which are crucial in racing. \cite{jia2023rapid} poses the problem as a constrained potential game. This results in good computational efficiency for game-theoretic prediction and planning in a real-time, model predictive game play (MPGP) manner, but importantly precludes zero-sum components in the agents' cost functions by definition of the potential game. This makes it difficult to capture direct competition between the agents and instead the approach relies on heuristics and mode switching in order to induce competitive behavior. \cite{spica2020real,wang2021game,wang2019gamea,wang2019gameb} leverage the aforementioned SE-IBR algorithm to approximately solve a dynamic game in a MPGP manner and use the approximate GNEs as high-level plans for a tracking policy in hardware races. However, this work, like all of the ones discussed so far in this section, have only shown GNE solution results for linear models such as the integrator model or simple nonlinear models such as the kinematic bicycle model. The only exception to this is in \cite{liniger2019noncooperative} where a bimatrix game was defined over the incurred costs of sampled trajectory rollouts for two vehicles each using the dynamic bicycle model. The trajectory pair corresponding to the NE of this bimatrix game was then chosen as the opponent prediction and EV plan. This approach can be considered as a zeroth order method for the solution of GNEs as it only requires evaluations of the components of the dynamic game. (This is in contrast to \cite{lecleach2020algames,dirkse1995path} and our approach, which leverage gradients of the game components). As such, it provides a simple framework for leveraging high fidelity vehicle models. However, it is straightforward to see that the effectiveness of this approach largely depends on the number of trajectory samples taken and it is unclear if the trajectory pair corresponding to the NE of the bimatrix game is a GNE of the dynamic game.

\section{Problem Formulation}

Consider an $M$-agent, finite-horizon, discrete-time, general-sum, open-loop, dynamic game whose state is characterized by the joint dynamical system:
\begin{align} \label{eq:joint_dynamics}
    x_{k+1} = f(x_k, u_k),
\end{align}
where $x_k^i \in \mathcal{X}^i$ and $u_k^i \in \mathcal{U}^i$ are the state and input of agent $i$ at time step $k$  and
\begin{align*}
    x_k &= \begin{bmatrix} {x_k^1}^\top, \dots, {x_k^M}^\top \end{bmatrix}^\top \in \mathcal{X}^1 \times \dots \times \mathcal{X}^M = \mathcal{X} \subseteq \mathbb{R}^n \\
    u_k &= \begin{bmatrix} {u_k^1}^\top, \dots, {u_k^M}^\top \end{bmatrix}^\top \in \mathcal{U}^1 \times \dots \times \mathcal{U}^M = \mathcal{U} \subseteq  \mathbb{R}^m,
\end{align*}
are the concatenated states and inputs of all agents. In this work, we will use the notation $x_k^{\neg i}$ and $u_k^{\neg i}$ to denote the collection of states and inputs for all but the $i$-th agent.

Each agent $i$ attempts to minimize its own cost function, which is comprised of stage costs $l_k^i$ and terminal cost $l_N^i$, over a horizon of length $N$:
\begin{subequations} \label{eq:agent_cost}
\begin{align} 
    \bar{J}^i(\mathbf{x}, \mathbf{u}^i) &= \sum_{k = 0}^{N-1} l_k^i(x_k, u_k^i) + l_N^i(x_N) \label{eq:agent_cost_xu} \\
    &= J^i(\mathbf{u}^1, \dots, \mathbf{u}^M, x_0), \label{eq:agent_cost_u}
\end{align}
\end{subequations}
where $\mathbf{x} = \{x_0, \dots, x_N\}$ and $\mathbf{u}^i = \{u_0^i, \dots, u_{N-1}^i\}$ denote state and input sequences over the horizon. Note that the cost in \eqref{eq:agent_cost_xu} for agent $i$ depends on its \emph{own} inputs and the \emph{joint} state, which can capture dependence on the behavior of the other agents. We arrive at \eqref{eq:agent_cost_u} by recursively substituting in the dynamics \eqref{eq:joint_dynamics} to the cost function, which are naturally a function of the open-loop input sequences for all agents. The agents are additionally subject to $n_c$ constraints
\begin{align} \label{eq:joint_constraints}
    C(\mathbf{u}^1, \dots, \mathbf{u}^M, x_0) \leq 0,
\end{align}
which can be used to describe individual constraints as well as coupling between agents and where we have once again made the dependence on the joint dynamics implicit. For the sake of brevity, when focusing on agent $i$, we omit the inital state $x_0$ and write the cost and constraint functions as $J^i(\mathbf{u}^i,\mathbf{u}^{\neg i})$ and $C(\mathbf{u}^i,\mathbf{u}^{\neg i})$. Let us now define the conditional constraint set
\begin{align*}
    \mathcal{U}^i(\mathbf{u}^{\neg i}) =& \{ \mathbf{u}^i \ | \ C(\mathbf{u}^i, \mathbf{u}^{\neg i}) \leq 0 \},
\end{align*}
which can be interpreted as a restriction of the joint constraint set for agent $i$ given some $\mathbf{u}^{\neg i}$. We make the following assumption about the functions and constraint sets.
\begin{assumption} \label{asm:compact_set_differentiability}
    The sets $\mathcal{X}^i$ and $\mathcal{U}^i$ are compact and the functions $f$, $J^i$, and $C$ are three times continuously differentiable on $\mathcal{X}$ and $\mathcal{U}$ for all $i \in \{1, \dots, M\}$.
\end{assumption}

\subsection{Generalized Nash Equilibrium}

We define the constrained dynamic game as the tuple:
\begin{align} \label{eq:dynamic_game}
    \Gamma = (N, \mathcal{X}, \mathcal{U}, f, \{J^i\}_{i=1}^M, C).
\end{align}
For such a game, a GNE \cite{facchinei2010generalized} is attained at the set of feasible input sequences $\mathbf{u} = \{\mathbf{u}^{i}\}_{i=1}^M$ which minimize \eqref{eq:agent_cost} for all agents $i$. Formally, we define this solution concept as follows:
\begin{definition}
    A generalized Nash equilibrium (GNE) for the dynamic game $\Gamma$ is the set of open-loop solutions $\mathbf{u}^{\star} = \{\mathbf{u}^{i,\star}\}_{i=1}^M$ such that for each agent $i$:
    \begin{align}
        J^i(\mathbf{u}^{i,\star}, \mathbf{u}^{\neg i,\star}) \leq J^i(\mathbf{u}^{i}, \mathbf{u}^{\neg i,\star}), \ \forall \mathbf{u}^{i} \in \mathcal{U}^i(\mathbf{u}^{\neg i,\star}). \nonumber
    \end{align}
    If the condition holds only in some local neighborhood of $\mathbf{u}^{i,\star}$, then $\mathbf{u}^{\star}$ is denoted as  a local GNE.
\end{definition}
In other words, at a local GNE, agents cannot improve their cost by unilaterally perturbing their open-loop solution in a locally feasible direction. Furhtermore, it was shown in \cite[Theorem 2.2]{laine2023computation} that the local GNE for agent $i$ can be obtained equivalently by solving the following constrained finite horizon optimal control problems (FHOCP):
\begin{align} \label{eq:agent_fhocp}
    \mathbf{u}^{i,\star}(\mathbf{u}^{\neg i,\star}) = \arg\min_{\mathbf{u}^i} \ & \ J^i(\mathbf{u}^i, \mathbf{u}^{\neg i,\star}) \\
    \text{subject to} \ & \ C(\mathbf{u}^i, \mathbf{u}^{\neg i,\star}) \leq 0. \nonumber
\end{align}
where $\mathbf{u}^{\neg i,\star}$ correspond to local GNE solutions for the other agents. Note that we are assuming uniqueness of the local GNE of \eqref{eq:dynamic_game}. This assumption will be made formal in the next section. A distinct advantage of using \eqref{eq:agent_fhocp} to model agent interactions is that a dynamic game allows for a direct representation of agents with competing objectives as the $M$ objectives are considered separately instead of being summed together, which is typical in cooperative multi-agent approaches \cite{zhu2020trajectory}.

\section{An SQP Approach to Dynamic Games} \label{sec:sqp_approach}

In this section, we propose a method which iteratively solves for open-loop local GNE of dynamic games using sequential quadratic approximations. In particular, we will derive the algorithm and present guarantees on local convergence, which is based on established SQP theory \cite{boggs1995sequential}. We begin by defining the Lagrangian functions for the $M$ coupled FHOCPs in \eqref{eq:agent_fhocp}:
\begin{align*}
    \mathcal{L}^i(\mathbf{u}^i, \mathbf{u}^{\neg i,\star}, \lambda^i)  = J^i(\mathbf{u}^i, \mathbf{u}^{\neg i,\star}) +  C(\mathbf{u}^i, \mathbf{u}^{\neg i,\star})^\top \lambda^i,
\end{align*}
where we have again omitted the dependence on the initial state $x_0$ for brevity. As in \cite{lecleach2020algames}, we require that the Lagrange multipliers $\lambda^i \geq 0$ are equal over all agents, i.e $\lambda^i = \lambda^j = \lambda$, $\forall i,j \in \{1, \dots,M\}$. Since the multipliers reflect the sensitivity of the optimal cost w.r.t. constraint violation, this can be interpreted as a requirement for parity in terms of constraint violation for all agents. Under this condition, the GNE from \eqref{eq:agent_fhocp} are also known as normalized Nash equilibria \cite{rosen1965existence}.

A direct consequence of writing the constrained dynamic game in the coupled nonlinear optimization form of \eqref{eq:agent_fhocp} is that, subject to regularity conditions, solutions of \eqref{eq:agent_fhocp} must satisfy the KKT conditions below:
\begin{subequations} \label{eq:kkt}
\begin{align} 
    \nabla_{\mathbf{u}^i} \mathcal{L}^i(\mathbf{u}^{i,\star}, \mathbf{u}^{\neg i,\star}, \lambda^\star) &= 0, \ \forall i = 1, \dots, M, \label{eq:stationarity}\\
    C(\mathbf{u}^{1,\star},\dots, \mathbf{u}^{M,\star}) &\leq 0, \label{eq:primal_feasibility}\\
    C(\mathbf{u}^{1,\star},\dots, \mathbf{u}^{M,\star})^{\top}\lambda^\star &= 0, \\
    \lambda^\star & \geq 0.
\end{align}
\end{subequations}

We therefore propose to find a local GNE as a solution to the KKT system \eqref{eq:kkt} in an iterative fashion starting from an initial guess for the primal and dual solution, which we denote as $\mathbf{u}^i_0$ and $\lambda_0 \geq 0$ respectively, and taking steps $p_q^i$ and $p_q^\lambda$, at iteration $q$, to obtain the sequence of iterates:
\begin{align} \label{eq:sqp_step}
    \mathbf{u}^i_{q+1} = \mathbf{u}^i_q + p_q^i, \ \lambda_{q+1} = \lambda_q + p_q^\lambda.
\end{align}
In particular, we form a quadratic approximation of \eqref{eq:stationarity} and linearize the constraints in \eqref{eq:primal_feasibility} about the primal and dual solution at iteration $q$ in a SQP manner \cite{wright1999numerical} as follows:
\begin{align} \label{eq:sqp_approximation}
    L_q &= \begin{bmatrix}
    \nabla_{\mathbf{u}^1}^2 \mathcal{L}_q^1 & \nabla_{\mathbf{u}^2,\mathbf{u}^1} \mathcal{L}_q^1 & \dots & \nabla_{\mathbf{u}^M,\mathbf{u}^1} \mathcal{L}_q^1 \\
    \nabla_{\mathbf{u}^1,\mathbf{u}^2} \mathcal{L}_q^2 & \nabla_{\mathbf{u}^2}^2 \mathcal{L}_q^2 & \dots & \nabla_{\mathbf{u}^M,\mathbf{u}^2} \mathcal{L}_q^2 \\
    \vdots & \vdots & \ddots & \vdots \\
    \nabla_{\mathbf{u}^1,\mathbf{u}^M} \mathcal{L}_q^M & \nabla_{\mathbf{u}^2,\mathbf{u}^M} \mathcal{L}_q^M & \dots & \nabla_{\mathbf{u}^M}^2 \mathcal{L}_q^M
    \end{bmatrix}, \nonumber\\
    h_q &= \begin{bmatrix} \nabla_{\mathbf{u}^1} J_q^1 & \nabla_{\mathbf{u}^2} J_q^2 & \dots & \nabla_{\mathbf{u}^M} J_q^M \end{bmatrix}^\top, \nonumber\\
    G_q &= \begin{bmatrix} \nabla_{\mathbf{u}^1} C_q & \nabla_{\mathbf{u}^2} C_q & \dots & \nabla_{\mathbf{u}^M} C_q \end{bmatrix}, \nonumber\\
    B_q &= \text{proj}_{\succeq 0}((L_q+L_q^\top)/2) + \epsilon I,
\end{align}
where the subscript $q$ indicates that the corresponding quantity is evaluated at the primal and dual iterate $\mathbf{u}_q$ and $\lambda_q$. Here, $\epsilon \geq 0$ is a regularization coefficient, $I$ is the identity matrix of appropriate size, and 
\begin{align}
    \text{proj}_{\succeq 0}(X) = \arg \min_{Y\succeq 0} \|X-Y\|_F^2 \label{eq:psd_projection}
\end{align}
denotes the operation which projects the symmetric matrix $X \in \mathbb{S}^n$ onto the positive semi-definite cone, where $\|\cdot\|_F$ denotes the Frobenius norm. We note that the semidefinite program in \eqref{eq:psd_projection} admits the closed-form solution
$\text{proj}_{\succeq 0}(X) = \sum_{i=1}^n \max\{0, s_i\}v_iv_i^\top$, where $s_i$ and $v_i$ denote the $i$-th eigenvalue and eigenvector of $X$ respectively. 
Using the approximation in \eqref{eq:sqp_approximation}, we solve for the step in the primal variables via the following convex quadratic program (QP):
\begin{subequations} \label{eq:sqp_qp}
\begin{align} 
    p_q^\mathbf{u} = \arg \min_{p^1, \dots, p^M} \ & \ \frac{1}{2} {p^\mathbf{u}}^\top B_q {p^\mathbf{u}} + h_q^\top {p^\mathbf{u}} \label{eq:sqp_qp_cost}\\
    \text{subject to} \ & \ C_q + G_q {p^\mathbf{u}} \leq 0. \label{eq:sqp_qp_ineq_constraint}
\end{align}
\end{subequations}
where we denote $p^\mathbf{u} = [{p^1}^\top, \dots, {p^M}^\top]^\top$. Denote the Lagrange multipliers corresponding to the solution of \eqref{eq:sqp_qp} as $d_q$. We then define the step in the dual variables as 
\begin{align} \label{eq:dual_step}
    p_q^\lambda = d_q - \lambda_q,
\end{align}
which maintains nonnegativity of the dual iterates given $\lambda_0 \geq 0$. 
Finally, we note that in contrast to the approach used in \cite{spica2020real}, which requires the solution of $M$ optimization problems, our SQP procedure requires the solution of only a single QP at each iteration. 

\subsection{Local Behavior of Dynamic Game SQP}

We make the following assumptions about the primal and dual solutions of \eqref{eq:agent_fhocp}:
\begin{assumption} \label{asm:optimality}
    Solutions $\{\mathbf{u}^{i,\star}\}_{i=1}^M$ and $\lambda^\star$ of \eqref{eq:agent_fhocp} satisfy the following, for each $i\in\{1,\dots,M\}$:
    \begin{itemize}
        \item $\lambda^\star \perp C(\mathbf{u}^{i,\star}, \mathbf{u}^{\neg i,\star})$ and $\lambda_j^\star = 0 \iff C_j(\mathbf{u}^{i,\star}, \mathbf{u}^{\neg i,\star}) < 0$ for $j=1,\dots,n_c$.
        \item The rows of the Jacobian of the active constraints at the local GNE, i.e. $\nabla_{\mathbf{u}^i}\bar{C}(\mathbf{u}^{i,\star}, \mathbf{u}^{\neg i,\star})$, are linearly independent.
        \item $d^\top \nabla_{\mathbf{u}^i}^2 \mathcal{L}^i(\mathbf{u}^{i,\star}, \mathbf{u}^{\neg i,\star}, \lambda^\star) d > 0$, $\forall d \neq 0$ such that $\nabla_{\mathbf{u}^i}\bar{C}(\mathbf{u}^{i,\star}, \mathbf{u}^{\neg i,\star})^\top d = 0$.
    \end{itemize}
\end{assumption}
The first assumption is strict complementary slackness, the second is the linear independence constraint qualification (LICQ), and the third states that the Hessian of the Lagrangian function is positive definite on the null space of the active constraint Jacobians at the solution. 
It is straight forward to see that 
By standard optimization theory,
\eqref{eq:kkt} and Assumption~\ref{asm:optimality} together constitute necessary and sufficient conditions for a primal and dual solution of \eqref{eq:agent_fhocp} for agent $i$ to be locally optimal and unique given $\mathbf{u}^{\neg i,\star}$. When these conditions hold for the solutions over all agents, it was shown in \cite{laine2023computation} that satisfaction of the requirements for a unique local GNE follow immediately. Note that, as in \cite{boggs1995sequential} and \cite{laine2023computation}, Assumption~\ref{asm:optimality} is standard and can be verified a posteriori.

To analyze the local behavior of the iterative procedure as defined by \eqref{eq:sqp_step}, \eqref{eq:sqp_approximation}, and \eqref{eq:sqp_qp}, let us assume that $\mathbf{u}_0$ and $\lambda_0$ are close to the optimal solution and the subset of active constraints at the local GNE, which we denote as $\bar{C}$, with Jacobian $\bar{G}$, is known and constant at each iteration $q$. Therefore, for the purposes of this section, we can replace the inequality constraint in \eqref{eq:sqp_qp_ineq_constraint} with the equality constraint:
\begin{align} \label{eq:sqp_qp_eq_constraint}
    \bar{C}_q + \bar{G}_q p^{\mathbf{u}} = 0.
\end{align}
We refer to the QP constructed from \eqref{eq:sqp_qp_cost} and \eqref{eq:sqp_qp_eq_constraint} as EQP.

In the traditional derivation of the SQP procedure \cite{boggs1995sequential,wright1999numerical}, it was shown that under the aforementioned assumptions, the SQP step computed is identical to a Newton step for the corresponding KKT system. The SQP step therefore inherits the quadratic convergence rate of Newton's method in a local neighborhood of the optimal solution \cite[Theorem 3.1]{boggs1995sequential}. However, in the case of dynamic games, the equivalence between the SQP procedure and Newton's method is no longer exact since the matrix $L_q$ is not symmetric in general. To see this, let us first state the joint KKT system for the equality constrained version of \eqref{eq:agent_fhocp}:
\begin{align} \label{eq:root_finding_problem}
    & F(\mathbf{u}^{1,\star}, \dots, \mathbf{u}^{M,\star}, \lambda^\star) \\
    & \qquad = \begin{bmatrix}
        \nabla_{\mathbf{u}^1} \mathcal{L}^1(\mathbf{u}^{1,\star},\dots, \mathbf{u}^{M,\star}, \lambda^\star) \\
        \vdots \\
        \nabla_{\mathbf{u}^M} \mathcal{L}^M(\mathbf{u}^{1,\star},\dots, \mathbf{u}^{M,\star}, \lambda^\star) \\
        \bar{C}(\mathbf{u}^{1,\star},\dots, \mathbf{u}^{M,\star})
    \end{bmatrix} = 0, \nonumber
\end{align}
For the system of equations \eqref{eq:root_finding_problem}, the Newton step at iteration $q$ is the solution of the linear system:
\begin{align} \label{eq:newton_step}
    \begin{bmatrix}
    L_q & \bar{G}_q^\top \\
    \bar{G}_q & 0
    \end{bmatrix} \begin{bmatrix} \bar{p}_q^\mathbf{u} \\ \bar{p}_q^\lambda \end{bmatrix} = - \begin{bmatrix} h_q + \bar{G}_q^\top \lambda_q \\ \bar{C}_q \end{bmatrix}.
\end{align}
On the other hand, by the first order optimality conditions for EQP, we have that the SQP step must satisfy
\begin{align} \label{eq:sqp_kkt}
    \begin{bmatrix}
    B_q & \bar{G}_q^\top \\
    \bar{G}_q & 0
    \end{bmatrix} \begin{bmatrix} p_q^\mathbf{u} \\ p_q^\lambda \end{bmatrix} = - \begin{bmatrix} h_q + \bar{G}_q^\top \lambda_q \\ \bar{C}_q \end{bmatrix}.
\end{align}
When the matrix $L_q$ is positive definite and $\epsilon=0$, \eqref{eq:sqp_kkt} and \eqref{eq:newton_step} are equivalent. This corresponds to the special case of potential games \cite{zhu2008lagrangian}. However, this is not true in general for our SQP step, which implies that we cannot inherit the quadratic convergence of Newton's method. Instead, the SQP step from \eqref{eq:sqp_qp_cost} and \eqref{eq:sqp_qp_eq_constraint} can be seen as a symmetric approximation to the Newton step. As such, we establish guaranteed local linear convergence for our SQP procedure via established theory for SQP with approximate Hessians. Before proving the main result of this section, let us first define the bounded deterioration property, which essentially requires that the distance between a matrix and its approximations are bounded.
\begin{definition} \label{def:bounded_deterioration}
    A sequence of matrix approximations $\{B_q\}$ to $L^\star$ for the SQP method is said to have the property of \emph{bounded deterioration} if there exist constants $\alpha_1$ and $\alpha_2$ independent of $q$ such that:
    \begin{align*}
        \|B_{q+1}-L^\star\| \leq (1+\alpha_1 \sigma_q) \|B_q-L^\star\| + \alpha_2 \sigma_q,
    \end{align*}
    where $\sigma_q = \max(\|\mathbf{u}_{q+1}-\mathbf{u}^\star\|, \|\mathbf{u}_{q}-\mathbf{u}^\star\|, \|\lambda_{q+1}-\lambda^\star\|, \|\lambda_{q}-\lambda^\star\|)$, and $L^\star$ denotes the matrix $L$ in \eqref{eq:sqp_approximation} evaluated at the solution $\mathbf{u}^\star$, $\lambda^\star$.
\end{definition}
\begin{theorem}
Consider the dynamic game defined by \eqref{eq:dynamic_game}. Let Assumptions~\ref{asm:compact_set_differentiability} and \ref{asm:optimality} hold. Then there exist positive constants $\epsilon_1$ and $\epsilon_2$ such that if
\begin{align*}
    \|\mathbf{u}_0 - \mathbf{u}^\star\| \leq \epsilon_1, \ \|B_0 - L^\star\| \leq \epsilon_2,
\end{align*}
and $\lambda_0 = -(\bar{G}_0 \bar{G}_0^\top)^{-1}\bar{G}_0 h_0$, then the sequence $(\mathbf{u}_q, \lambda_q)$ generated by the SQP procedure \eqref{eq:sqp_step} and \eqref{eq:sqp_qp} converges linearly to $(\mathbf{u}^\star, \lambda^\star)$.
\end{theorem}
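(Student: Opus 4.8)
The plan is to recast the SQP recursion as an approximate-Newton (quasi-Newton) iteration for the root-finding problem $F(z)=0$ of \eqref{eq:root_finding_problem}, with $z=(\mathbf{u},\lambda)$, and then to invoke the classical local convergence theory for SQP with approximate Hessians from \cite{boggs1995sequential}. Comparing \eqref{eq:newton_step} with \eqref{eq:sqp_kkt}, the exact Jacobian of $F$ at the solution is the Newton KKT matrix $J^\star = \begin{bmatrix} L^\star & \bar{G}^{\star\top} \\ \bar{G}^\star & 0 \end{bmatrix}$, whereas each SQP step inverts the symmetric approximation $A_q = \begin{bmatrix} B_q & \bar{G}_q^\top \\ \bar{G}_q & 0 \end{bmatrix}$. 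The result then follows once three hypotheses are verified: (i) nonsingularity of $J^\star$; (ii) uniform nonsingularity of $A_q$ near the solution; and (iii) the bounded deterioration property of Definition~\ref{def:bounded_deterioration} for the sequence $\{B_q\}$.

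For (i) I would use Assumption~\ref{asm:optimality}: LICQ gives $\bar{G}^\star$ full row rank, and the second-order conditions (which by \cite{laine2023computation} characterize the unique local GNE) yield a standard null-space argument — if $J^\star(v,w)=0$ then $\bar{G}^\star v = 0$ and $v^\top L^\star v = v^\top \tfrac{1}{2}(L^\star + L^{\star\top}) v = 0$ force $v=0$, after which full rank of $\bar{G}^\star$ forces $w=0$ — establishing nonsingularity of $J^\star$ despite the non-symmetry of $L^\star$. Step (ii) is a perturbation argument: since $B_q = \mathrm{proj}_{\succeq 0}(\cdot) + \epsilon I \succeq \epsilon I$ and $\bar{G}_q$ retains full row rank for $\mathbf{u}_q$ near $\mathbf{u}^\star$ by continuity, the matrix $A_q$ is invertible with uniformly bounded inverse in a neighborhood of the solution.

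The crux is step (iii), establishing bounded deterioration for the game-specific approximation $B_q = \mathrm{proj}_{\succeq 0}((L_q+L_q^\top)/2) + \epsilon I$. Here I would exploit that, under Assumption~\ref{asm:compact_set_differentiability}, the map $(\mathbf{u},\lambda)\mapsto L$ is Lipschitz on the compact domain (its entries are continuous derivatives of three-times-differentiable functions), and that both symmetrization $X \mapsto (X+X^\top)/2$ and the Euclidean projection onto the PSD cone are non-expansive. Chaining these gives $\|B_{q+1}-B_q\| \le \kappa\,\|z_{q+1}-z_q\| \le 2\kappa\,\sigma_q$ for some constant $\kappa$, whence $\|B_{q+1}-L^\star\| \le \|B_q - L^\star\| + 2\kappa\,\sigma_q$, which is precisely the bounded deterioration inequality with $\alpha_1=0$ and $\alpha_2 = 2\kappa$. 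This step is also the main obstacle: because $B_q$ is symmetric PSD while $L^\star$ is neither, the approximation error $\|B_q - L^\star\|$ does not vanish (it retains the skew part of $L^\star$, the clipping of its negative eigenvalues, and the $\epsilon I$ term), so one can only aim for \emph{linear} rather than superlinear convergence, and I must track carefully that this persistent residual stays within the tolerance $\epsilon_2$ required by the theorem.

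Finally, I would account for the prescribed dual initialization. The least-squares choice $\lambda_0 = -(\bar{G}_0\bar{G}_0^\top)^{-1}\bar{G}_0 h_0$ is continuous in $\mathbf{u}_0$ and returns $\lambda^\star$ exactly at the solution, since stationarity gives $h^\star + \bar{G}^{\star\top}\lambda^\star = 0$ and $\bar{G}^\star$ has full row rank; hence $\|\lambda_0 - \lambda^\star\| \le c\,\|\mathbf{u}_0 - \mathbf{u}^\star\|$, which collapses the required closeness of the full primal-dual iterate into the single hypothesis $\|\mathbf{u}_0 - \mathbf{u}^\star\| \le \epsilon_1$. With (i)--(iii) verified and the initial errors bounded by $\epsilon_1$ and $\epsilon_2$, the approximate-Hessian SQP convergence theorem of \cite{boggs1995sequential} yields linear convergence of $(\mathbf{u}_q,\lambda_q)$ to $(\mathbf{u}^\star,\lambda^\star)$.
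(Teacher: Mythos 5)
Your proposal is correct, and its overall skeleton matches the paper's proof: both reduce the theorem to verifying the hypotheses of the approximate-Hessian SQP convergence result \cite[Theorem 3.3]{boggs1995sequential}, and both establish the bounded deterioration property of Definition~\ref{def:bounded_deterioration} with $\alpha_1 = 0$ by bounding $\|B_{q+1}-B_q\|$ in terms of $\sigma_q$. Where you genuinely diverge is in how that key bound is obtained. The paper argues that $(\mathbf{u},\lambda) \mapsto B(\mathbf{u},\lambda)$ is continuous and \emph{differentiable} — invoking \cite{agrawal2019differentiating} to differentiate through the conic projection \eqref{eq:psd_projection} — and then applies the mean value theorem with intermediate points $B(\mathbf{u}^\star,\lambda_{q+1})$ and $B(\mathbf{u}^\star,\lambda_q)$, yielding $\alpha_2 = \max(\beta_1,\beta_2,\beta_3,\beta_4)$ as suprema of derivative norms along line segments. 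You instead chain three Lipschitz estimates: $L$ is Lipschitz near the solution (by Assumption~\ref{asm:compact_set_differentiability}), symmetrization is linear and non-expansive in Frobenius norm, and the metric projection onto the closed convex PSD cone is non-expansive; the $\epsilon I$ terms cancel in $B_{q+1}-B_q$, giving $\|B_{q+1}-B_q\| \le \kappa\|z_{q+1}-z_q\| \lesssim \sigma_q$ directly. Your route is more elementary and arguably more robust: the PSD projection is in fact \emph{not} differentiable everywhere (it fails at matrices with a zero eigenvalue), so the paper's appeal to differentiability requires regularity that your non-expansiveness argument simply does not need. Your additional items — nonsingularity of the KKT matrices via LICQ and the second-order condition, and the observation that the least-squares dual initialization satisfies $\|\lambda_0-\lambda^\star\| \le c\|\mathbf{u}_0-\mathbf{u}^\star\|$ so that primal closeness suffices — are left implicit in the paper (they are part of the standing hypotheses and initialization of the cited theorem), and making them explicit strengthens rather than changes the argument. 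Your remark that the residual $\|B_q - L^\star\|$ cannot vanish (skew part, eigenvalue clipping, $\epsilon I$), so only linear and not superlinear convergence is attainable, correctly identifies why the theorem claims exactly the rate it does.
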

\begin{proof}
We obtain the result by showing that the conditions of \cite[Theorem 3.3]{boggs1995sequential} are satisfied. The first condition requires that the approximations $B_q$ are positive definite on the null space of $\bar{G}_q$. Since $B_q$ is constructed by projecting $L_q$ into the positive definite cone, this condition is satisfied trivially. 

The second condition requires that the sequence $\{B_q\}$ satisfies the bounded deterioration property from Definition~\ref{def:bounded_deterioration}. To show this, we begin with the following derivation:
\begin{align*}
    \|B_{q+1}-L^\star\| &= \|B_{q+1}-B_q + B_q - L^\star\| \\
    & \leq \|B_q - L^\star\| + \|B_{q+1}-B_q\|.
\end{align*}
From the above, it can be seen that $\{B_q\}$ satisfies the bounded deterioration property with $\alpha_1 = 0$ if there exists $\alpha_2$ such that $\|B_{q+1}-B_q\| \leq \alpha_2 \sigma_q$. We will show this via the mean value theorem, where we first observe that by Assumption~\ref{asm:compact_set_differentiability}, $L(\mathbf{u}_q,\lambda_q)$ is continuous and differentiable in $\mathbf{u}_q$ and $\lambda_q$. The smoothness of $B(\mathbf{u}_q,\lambda_q)$ is therefore dependent on the smoothness of the projection operator in \eqref{eq:psd_projection}. It was established in \cite{agrawal2019differentiating} that for parametric strongly convex conic optimization problems, there exists a unique, continuous, and differentiable mapping from the parameters to the optimal solution of the problem. The projection operator \eqref{eq:psd_projection} is a semi-definite program which admits a unique solution \cite{boyd2004convex}. Therefore, we may conclude that $B(\mathbf{u}_q,\lambda_q)$ is continuous and differentiable in $\mathbf{u}_q$ and $\lambda_q$. This allows us to establish the following inequalities:
\begin{align*}
    \|B_{q+1}-B_q\| &= \|B_{q+1}-B^\star+B^\star-B_q\| \\
    &\leq \|B_{q+1}-B^\star\| + \|B_q - B^\star\| \\
    &= \|B_{q+1}-B(\mathbf{u}^\star,\lambda_{q+1})+B(\mathbf{u}^\star,\lambda_{q+1})-B^\star\| \\
    &\quad + \|B_q-B(\mathbf{u}^\star,\lambda_{q})+B(\mathbf{u}^\star,\lambda_{q})-B^\star\| \\
    &\leq \|B_{q+1}-B(\mathbf{u}^\star,\lambda_{q+1})\| \\
    &\quad + \|B(\mathbf{u}^\star,\lambda_{q+1})-B^\star\| \\
    &\quad + \|B_q-B(\mathbf{u}^\star,\lambda_{q})\| \\
    &\quad + \|B(\mathbf{u}^\star,\lambda_{q})-B^\star\|.
\end{align*}
Since $B$ is continuous and differentiable, by the mean value theorem:
\begin{align*}
    &\|B_{q+1}-B(\mathbf{u}^\star,\lambda_{q+1})\| + \|B(\mathbf{u}^\star,\lambda_{q+1})-B^\star\| \\
    &\quad + \|B_q-B(\mathbf{u}^\star,\lambda_{q})\| + \|B(\mathbf{u}^\star,\lambda_{q})-B^\star\| \\
    \leq \ & \beta_1 \|\mathbf{u}_{q+1}-\mathbf{u}^\star\|+ \beta_2 \|\lambda_{q+1} - \lambda^\star\| \\
    &\quad + \beta_3 \|\mathbf{u}_{q}-\mathbf{u}^\star\| +\beta_4 \|\lambda_{q} - \lambda^\star\| \\
    \leq \ & \max(\beta_1, \beta_2, \beta_3, \beta_4) \sigma_q,
\end{align*}
where
\begin{align*}
    \beta_1 &= \sup_{t \in [0,1]} \|\nabla_\mathbf{u} B(t\mathbf{u}_{q+1}+(1-t)\mathbf{u}^\star, \lambda_{q+1})\|, \\
    \beta_2 &= \sup_{t \in [0,1]} \|\nabla_\mathbf{\lambda} B(\mathbf{u}^\star, t\lambda_{q+1}+(1-t)\lambda^\star)\|, \\
    \beta_3 &= \sup_{t \in [0,1]} \|\nabla_\mathbf{u} B(t\mathbf{u}_{q}+(1-t)\mathbf{u}^\star, \lambda_{q})\|, \\
    \beta_4 &= \sup_{t \in [0,1]} \|\nabla_\mathbf{\lambda} B(\mathbf{u}^\star, t\lambda_{q}+(1-t)\lambda^\star)\|.
\end{align*}
This satisfies the bounded deterioration property with $\alpha_2 = \max(\beta_1, \beta_2, \beta_3, \beta_4)$. Given Assumption~\ref{asm:optimality} holds, linear convergence of the SQP iterations follows directly as a consequence of \cite[Theorem 3.3]{boggs1995sequential}.
\end{proof}

\section{Numerical Robustness of Dynamic Game SQP} \label{sec:practical_considerations}

We have shown that our proposed SQP approach exhibits linear convergence when close to a local GNE. However, as is commonly seen with numerical methods for nonlinear optimization, a na{\"i}ve implementation of the procedure defined by \eqref{eq:sqp_step}, \eqref{eq:sqp_approximation}, and \eqref{eq:sqp_qp} often performs poorly due to overly aggressive steps leading to diverging iterates. In this section, we introduce multiple practical considerations which will help address this problem in practice. Namely, a merit function, a non-monotonic line search method, and a decaying regularization scheme. These components will be used to determine how much of the SQP step $p_q^\mathbf{u}$ and $p_q^\lambda$ can be taken to make progress towards a local GNE while remaining in a region about the current iterate where the QP approximation \eqref{eq:sqp_qp} is valid. We will show through examples that these additions are crucial in improving the numerical robustness of the proposed SQP approach.

\subsection{A Novel Merit Function}

Merit functions are commonly used in conjunction with a backtracking line search technique as a mechanism for measuring progress towards the optimal solution and limiting the step size taken by an iterative solver \cite{wright1999numerical}. This is important as the quality of the QP approximation \eqref{eq:sqp_qp} may be poor when far away from the iterate, which could result in divergence of the iterates if large steps are taken.

In traditional constrained optimization, merit functions typically track a combination of cost value and constraint violation. In the context of dynamic games, this is not as straightforward as the agents may have conflicting objectives and a proposed step may result in an increase in the objectives of some agents along with a decrease in others'. We also cannot just simply sum the objectives as minimizers of the combined cost function may not be local GNE. We therefore propose the following merit function:
\begin{align} \label{eq:merit_function}
    \phi(\mathbf{u},&\lambda, s;\mu) = \nonumber \\
    &\frac{1}{2} \| \underbrace{\begin{bmatrix} \nabla_{\mathbf{u}^1} \mathcal{L}^1(\mathbf{u}, \lambda) \\
        \vdots \\
        \nabla_{\mathbf{u}^M} \mathcal{L}^M(\mathbf{u}, \lambda) \end{bmatrix}}_{=\nabla \mathcal{L}(\mathbf{u},\lambda)} \|_2^2  + \mu \|C(\mathbf{u}) - s\|_1, 
\end{align}
and define $\gamma(\mathbf{u},\lambda) = (1/2)\|\nabla \mathcal{L}(\mathbf{u},\lambda)\|_2^2$. The slack variable $s = \text{min}(0, C(\mathbf{u}))$ is defined element-wise such that $C-s$ captures violation of the inequality constraints and we define the step $p^s = C(\mathbf{u}) + G(\mathbf{u})p^\mathbf{u} - s$. Compared to the merit function from \cite{laine2023computation}, which only included the first term of \eqref{eq:merit_function}, the novelty of our's is the $l^1$ norm term, whose purpose will be described shortly. Instead of measuring the agent objectives, our merit function tracks the first order optimality conditions in addition to constraint violation. It should be easy to see that the merit function attains a minimum of zero at any local GNE. However, we note that this merit function is not \emph{exact} \cite{wright1999numerical} since the first order conditions are only necessary for optimality.

Since we would like the sequence of iterates to converge to the minimizers of $\phi$, it follows that at each iteration we would like take a step in a descent direction of $\phi$. As such, we will now, in a manner similar to \cite{wright1999numerical}, analyze the directional derivative of the proposed merit function in the direction of the steps computed from \eqref{eq:sqp_qp} and describe a procedure for choosing the parameter $\mu$ and the corresponding \emph{conditions} such that a descent in $\phi$ is guaranteed.

Since the $l^1$ norm is not differentiable everywhere, we begin by taking a Taylor series expansion of $\gamma$ and $C-s$ for $\alpha \in (0,1]$:
\begin{align*}
    &\phi(\mathbf{u}+\alpha p^{\mathbf{u}},\lambda+\alpha p^\lambda, s+\alpha p^s;\mu) - \phi(\mathbf{u},\lambda, s;\mu) \\
    &\leq \alpha \nabla_{\mathbf{u},\lambda}\gamma\begin{bmatrix} p^{\mathbf{u}} \\ p^\lambda \end{bmatrix} + \mu\|C-s + \alpha(Gp^\mathbf{u}-p^s)\|_1 \nonumber \\
    & \qquad - \mu\|C-s\|_1 + \beta \alpha^2 \|p\|_2^2 \\
    & = \alpha\left(\nabla_{\mathbf{u},\lambda}\gamma\begin{bmatrix} p^{\mathbf{u}} \\ p^\lambda \end{bmatrix} - \mu\|C-s\|_1\right) + \beta \alpha^2 \|p\|_2^2,
\end{align*}
where the term $\beta \alpha^2 \|p\|_2^2$ bounds the second derivative terms for some $\beta > 0$. Following a similar logic, we can obtain the bound in the other direction:
\begin{align*}
    &\phi(\mathbf{u}+\alpha p^{\mathbf{u}},\lambda+\alpha p^\lambda, s+\alpha p^s;\mu) - \phi(\mathbf{u},\lambda, s;\mu) \\
    &\geq \alpha\left(\nabla_{\mathbf{u},\lambda}\gamma\begin{bmatrix} p^{\mathbf{u}} \\ p^\lambda \end{bmatrix} - \mu\|C-s\|_1\right) - \beta \alpha^2 \|p\|_2^2.
\end{align*}
Dividing the inequality chain by $\alpha$ and taking the limit $\alpha\to 0$, we obtain the directional derivative:
\begin{align} \label{eq:merit_function_directional_derivative}
    D(\phi(\mathbf{u},\lambda,s;\mu), p^{\mathbf{u}}, p^\lambda) = \nabla_{\mathbf{u},\lambda}\gamma\begin{bmatrix} p^{\mathbf{u}} \\ p^\lambda \end{bmatrix} - \mu\|C-s\|_1
\end{align}
From \eqref{eq:merit_function_directional_derivative}, it should be clear that given $C-s \neq 0$, there exists a value for $\mu > 0$ such that the directional derivative is negative. As such, we propose the following expression to compute $\mu$, given some $\rho \in (0, 1)$:
\begin{align} \label{eq:merit_parameter}
    \mu \geq \left(\nabla_{\mathbf{u},\lambda}\gamma\begin{bmatrix} p^{\mathbf{u}} \\ p^\lambda \end{bmatrix}\right)/((1-\rho)\|C-s\|_1),
\end{align}
which results in $D(\phi(\mathbf{u},\lambda;\mu), p^{\mathbf{u}}, p^\lambda) \leq -\rho\mu\|C-s\|_1$.

In the case when $C-s = 0$, we unfortunately cannot guarantee that the directional derivative will always be negative. To see this, let us analyze the first term of \eqref{eq:merit_function_directional_derivative}:
\begin{align} \label{eq:stationarity_norm_directonal_derivative}
    \nabla_{\mathbf{u},\lambda}\gamma\begin{bmatrix} p^{\mathbf{u}} \\ p^\lambda \end{bmatrix} &= \nabla \mathcal{L}^\top \begin{bmatrix} L & G^\top \end{bmatrix} \begin{bmatrix} p^{\mathbf{u}} \\ p^\lambda \end{bmatrix} \nonumber \\
        &= \nabla \mathcal{L}^\top \begin{bmatrix} B + R & G^\top \end{bmatrix} \begin{bmatrix} p^{\mathbf{u}} \\ p^\lambda \end{bmatrix} \nonumber \\
        &= -\nabla \mathcal{L}^\top \nabla \mathcal{L} + \nabla \mathcal{L}^\top R p^\mathbf{u},
\end{align}
where $R$ denotes the residual matrix i.e. $R = L-B$ and we arrive at the third equality by plugging in for the stationarity condition from \eqref{eq:sqp_qp}. From \eqref{eq:stationarity_norm_directonal_derivative}, it should be immediately apparent that when the residual matrix is large and dominates the first term, it is possible for the directional derivative to be positive. This can be interpreted as a condition on how well the positive definite $B$ actually approximates the original stacked Hessian matrix $L$. For dynamic games where the agents have highly coupled and differing objectives, i.e. when $L$ is highly non-symmetric, it is reasonable to expect that the approximation would suffer and that we may not be able to achieve a decrease in the merit function. For this reason, we utilize a non-monotone strategy for the line search step, which will be discussed in the following.

\subsection{A Non-Monotone Line Search Strategy}

Line search methods are used in conjunction with merit functions to achieve a compromise between the goals of making rapid progress towards the optimal solution and keeping the iterates from diverging. This is done by finding the largest step size $\alpha \in (0, 1]$ such that the following standard decrease condition is satisfied \cite{boggs1995sequential}:
\begin{align} \label{eq:merit_decrease_condition}
    \phi(&\mathbf{u}_q+\alpha p_q^\mathbf{u}, \lambda_q+\alpha p_q^\lambda, s_q+\alpha p_q^s;\mu) \\
    &\leq \phi(\mathbf{u}_q, \lambda_q, s_q;\mu) + \zeta \alpha D(\phi(\mathbf{u}_q,\lambda_q,s_q;\mu), p_q^{\mathbf{u}}, p_q^\lambda), \nonumber
\end{align}
where $\zeta \in (0,0.5)$. However, since our merit function is not exact, the line search procedure can be susceptible to poor local minima which do not correspond to local GNE. Moreover, there is evidence that requiring monotonic decrease in the merit function at each iteration may actually impede the solution process \cite{ferris1994nonmonotone}. We therefore include in our approach a non-monotone approach to line search called the \emph{watchdog} strategy \cite{conn2000trust}. Instead of insisting on sufficient decrease in the merit function at every iteration, this approach allows for relaxed steps to be taken for a certain number of iterations where increases in the merit function are allowed. The decrease requirement \eqref{eq:merit_decrease_condition} is then enforced after the prescribed number of relaxed iterations or if the step size exceeds some given threshold. The rationale behind this strategy is that we can use the relaxed steps as a way to escape regions where it is difficult to make progress w.r.t. the merit function. The algorithm we implement is based on the non-monotone stabilization scheme presented in \cite{dirkse1995path} and involves the taking of \emph{d-steps} and \emph{m-steps}, where \emph{d-steps} correspond to relaxed steps $(\alpha = 1)$ which disregard the merit decrease condition and \emph{m-steps} which enforce merit decrease through an Armijo backtracking linesearch on $\alpha$ in \eqref{eq:merit_decrease_condition}. Furthermore, the method stores the list of iterates which satisfy the merit decrease condition as checkpoints. This allows for a \emph{watchdog} step to reset the iterate to a checkpoint in the event that an \emph{m-step} was unsuccessful and to perform a backtracking linesearch from there.


\subsection{A Decaying Regularization Scheme}

The regularization parameter $\epsilon$ introduced in \eqref{eq:sqp_approximation} is a common mechanism used in iterative Newton-based procedures to control the size of $p^{\mathbf{u}}$. However, the selection of the regularization parameter is non-trivial in many cases as the convergence behavior of the iterative procedure can be highly sensitive to the value of $\epsilon$. Often, it is observed that iterates can diverge if $\epsilon$ is too small, whereas progress is slow if $\epsilon$ is too big. To address this challenge, one method which has seen success is that of decaying regularization which given some initial setting of $\epsilon$ gradually decreases its value over iterations of the algorithm \cite{lecleach2020algames,wachter2006implementation,mehr2023maximum}. In this work, we take a similar approach to updating the values of $\epsilon$, but importantly, integrate it into the non-monotone line search procedure. Specifically, for some initial regularization value $\epsilon_0 \geq 0$ and decay multiplier $\eta \in (0, 1]$, we impose the following update rule:
\begin{align}
    \epsilon_{q+1} = \begin{cases}
    \epsilon_q & \qquad \text{if \emph{d-step}}, \\
    \eta \epsilon_q & \qquad \text{if \emph{m-step}},
    \end{cases}
\end{align}
where the regularization value only decays after a step which satisfies the merit function decrease condition is found. The reason for this is to avoid reducing the regularization strength when potentially spurious relaxed \emph{d-steps} are taken and the iterate is reset to a checkpoint. 

\section{The Dynamic Game SQP Algorithm}
\begin{algorithm}[t!]
    \SetAlgoLined
	\KwIn{$\mathbf{u}_0$, $\epsilon_0$}
	$q \leftarrow 0$\;
	$\mathbf{u}_q \leftarrow \mathbf{u}_0$, $\lambda_q \leftarrow \max(0, -(G_0 G_0^\top)^{-1}G_0 h_0)$\;
	\While{not converged}{
	    $B_q(\epsilon_q), \ h_q, \ G_q, \ C_q \leftarrow$ \eqref{eq:sqp_approximation}\;
	    $p_q^\mathbf{u}, \ p_q^\lambda \leftarrow$ \eqref{eq:sqp_qp}, \eqref{eq:dual_step}\;
        $s_q \leftarrow \min(0, C_q)$, $p_{q}^s \leftarrow C_q + G_q p_q^\mathbf{u} - s_q$\;
	    \eIf{$C_q-s_q\neq 0$}{
            Compute $\mu$ from \eqref{eq:merit_parameter}\;
	    }{
	        $\mu \leftarrow 0$\;
	    }
	    $\mathbf{u}_{q+1}, \ \lambda_{q+1}, \epsilon_{q+1} \leftarrow$ watchdog line search\;
	    $q \leftarrow q + 1$
	}
	\Return $\mathbf{u}^\star \leftarrow \mathbf{u}_q$, $\lambda^\star \leftarrow \lambda_q$\;
	\caption{Dynamic Game SQP (DG-SQP)}
	\label{alg:dynamic_game_sqp}
\end{algorithm}

By combining the elements discussed in the previous section, we arrive at the dynamic game SQP (DG-SQP) algorithm, which is presented in Algorithm~\ref{alg:dynamic_game_sqp}. The algorithm requires as input initial guesses of open-loop input sequences for each agent and the initial regularization value $\epsilon_0$. Line 2 initializes the primal and dual iterates, where the dual variables are initialized as the least squares solution to \eqref{eq:stationarity}. Lines 3 to 14 perform the SQP iteration which has been described in Sections~\ref{sec:sqp_approach} and \ref{sec:practical_considerations}. An iterate is said to have converged to a local GNE if it satisfies the KKT conditions described in \eqref{eq:kkt} up to some user specified tolerance. Namely, for some given $\rho_1, \ \rho_2, \ \rho_3 > 0$, we require the conditions $\|\nabla\mathcal{L}(\mathbf{u}_q, \lambda_q)\|_\infty \leq \rho_1, \ \|C(\mathbf{u}_q)\|_\infty \leq \rho_2, \ |\lambda_q^\top C(\mathbf{u}_q)| \leq \rho_3$ be satisfied in order for the algorithm to terminate successfully. 
The algorithm outputs the open-loop strategies for the $M$ agents and the corresponding Lagrange multipliers. 

\section{Frenet-Frame Games for Autonomous Racing} \label{sec:approximation}

\begin{figure}[t] 
    \centering
    \begin{subfigure}[t]{0.45\columnwidth}
        \centering
        \includegraphics[width=\columnwidth]{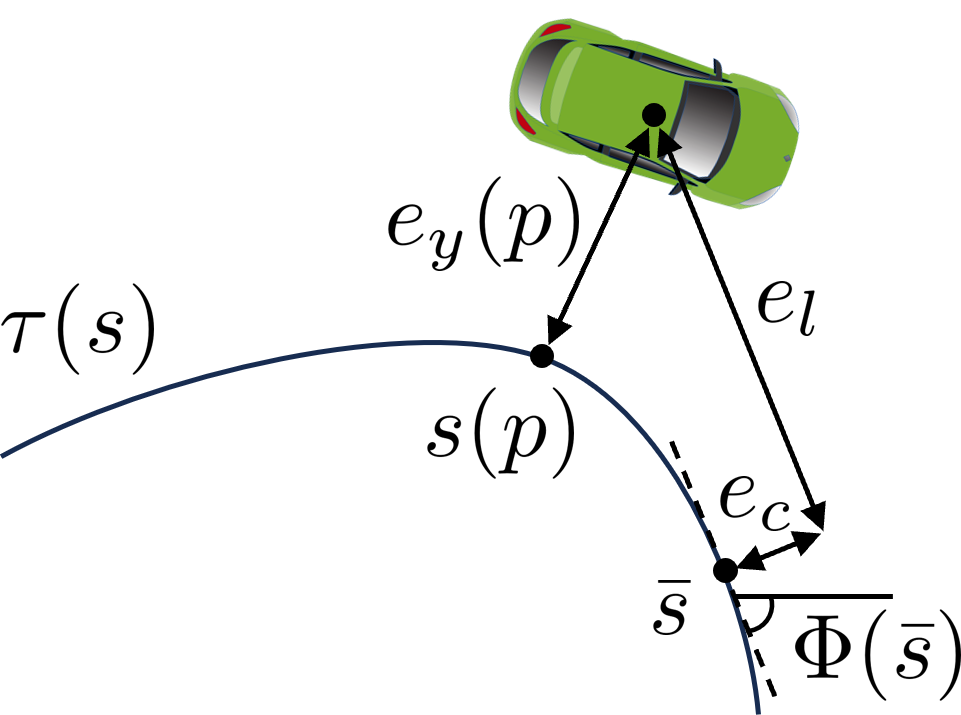}
        \caption{}
        \label{fig:contouring_lag_errors}
    \end{subfigure}
    ~
    \begin{subfigure}[t]{0.45\columnwidth}
        \centering
        \includegraphics[width=\columnwidth]{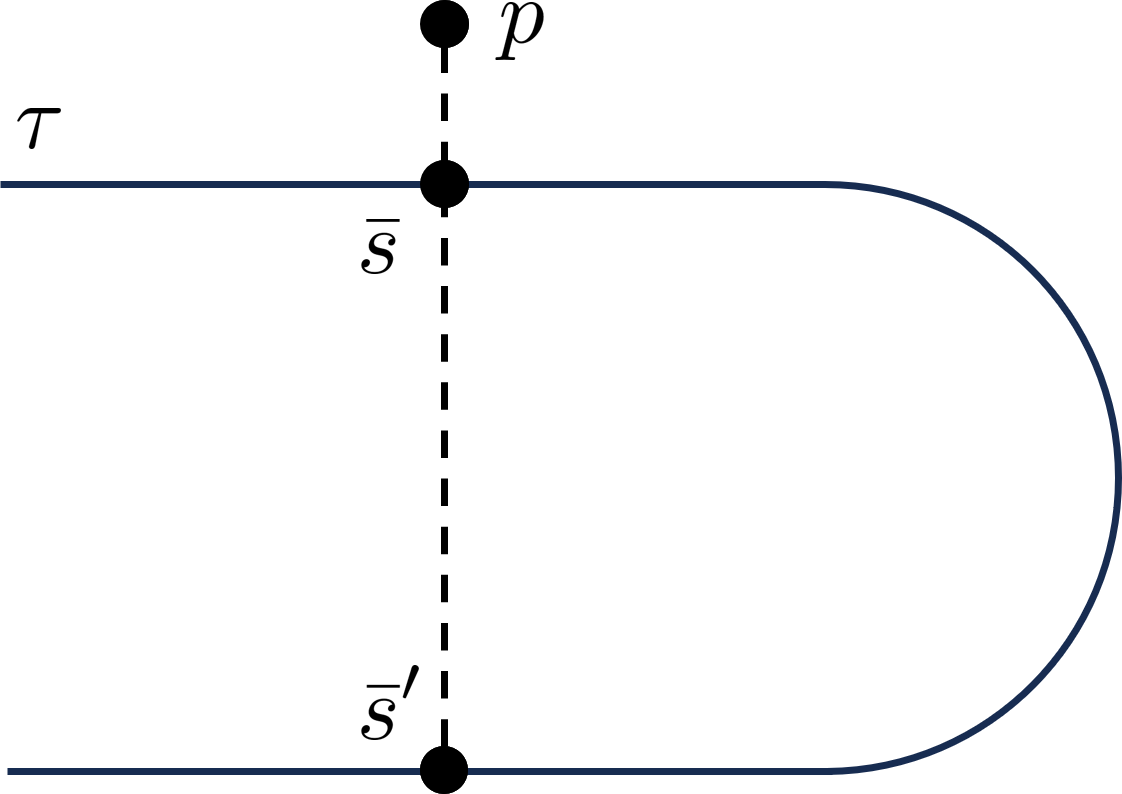}
        \caption{}
        \label{fig:poor_approx}
    \end{subfigure}
    \caption{(a) Illustration of the contouring and lag errors $e_c$ and $e_l$. (b) An example of a poor approximation $\bar{s}'$ where $e_l(p,\bar{s})=e_l(p,\bar{s}')=0$, but $\bar{s}' \neq s(p)$.}
\end{figure}

This section presents the second main contribution of this paper. 
We formulate the autonomous racing problem as a dynamic game using ideas from contouring control \cite{liniger2015optimization,faulwasser2009model} and leverage the solver presented in the previous section to compute the solution to such game.
Numerical studies in Section~\ref{sec:results} will show that with this formulation, we achieve significantly higher success rates when solving for GNE of racing dynamic games. 


We consider the class of racing dynamic games where 
the race track is represented as
a parametric path $\tau:[0, L] \mapsto \mathbb{R}^2$, which is assumed to be twice differentiable and of total length $L$. In particular, when given an arclength $s \in [0, L]$, $\tau$ returns the inertial $x$-$y$ position of the path and we may write the path tangent angle at $s$ is $\Phi(s) = \arctan(\tau_y'(s)/\tau_x'(s))$. In the case where $\tau$ forms a closed circuit we have that $\tau(0) = \tau(L)$, $\tau'(0) = \tau'(L)$, $\tau''(0) = \tau''(L)$, where $\tau'$ and $\tau''$ are the element-wise first and second derivatives of $\tau$.
Given a path $\tau$, we may then transform an inertial frame position $p = (x,y)$ and yaw angle $\psi$ into a path-relative pose as follows: 
\begin{subequations} \label{eq:frenet_pose}
    \begin{align}
        s(p) &= \arg \min_s \|\tau(s) - p\|_2, \label{eq:frenet_s} \\
        e_y(p) &= [-\sin \Phi(s(p)), \cos \Phi(s(p))] (p - \tau(s(p)), \label{eq:frenet_ey} \\
        e_\psi(p,\psi) &= \psi - \Phi(s(p)), \nonumber
    \end{align}
\end{subequations}
which are the path progress, the lateral displacement from the path, and the heading deviation from the path tangent corresponding to $p$ and $\psi$ respectively. 
These quantities, which are illustrated in Figure~\ref{fig:contouring_lag_errors}, are known as the coordinates of a Frenet reference frame \cite{micaelli1993trajectory} and they enable a straightforward expression of quantities which are important to the formulation of the autonomous racing problem as a dynamic game \cite{spica2020real}. First, we may express the difference in track progress between two agents as $s(p^i) - s(p^j)$. Maximization of this quantity in the game objective $J^i$ \eqref{eq:agent_cost_u} captures the direct competition between agents $i$ and $j$ in racing. Furthermore, using the lateral deviation in \eqref{eq:frenet_ey}, we may easily express track boundary constraints for agent $i$ as simple bounds on $e_y^i(p^i)$, which are then included in the game constraints $C$ \eqref{eq:joint_constraints}. Repeated evaluation of the potentially nonlinear optimization problem (NLP) in \eqref{eq:frenet_s} can be avoided by directly describing the evolution of a vehicle's kinematics in the Frenet reference frame \cite{micaelli1993trajectory}. In this work, we consider two vehicle models which can be expressed this way. The first is the Frenet-frame kinematic bicycle model which, for vehicle $i$, has the state and input vector:
\begin{subequations}
    \begin{align}
        x_{\text{kin}}^i &= [v, s, e_y, e_\psi]^\top \in \mathcal{X}_{\text{kin}}^i, \\
        u_{\text{kin}}^i &= [a, \delta_f]^\top \in \mathcal{U}_{\text{kin}}^i,
    \end{align}
\end{subequations}
where $v$ and $a$ are the velocity and acceleration of vehicle $i$ along its direction of travel and $\delta_f$ is the steering angle of the front wheel. The second is the Frenet-frame dynamic bicycle model which, for vehicle $i$, has the state and input vector:
\begin{subequations}
    \begin{align}
        x_{\text{dyn}}^i &= [v_x, v_y, \omega, s, e_y, e_\psi]^\top \in \mathcal{X}_{\text{dyn}}^i, \\
        u_{\text{dyn}}^i &= [a_x, \delta_f]^\top \in \mathcal{U}_{\text{dyn}}^i,
    \end{align}
\end{subequations}
where $v_x$ and $v_y$ are the longitudinal and lateral velocity, $\omega$ is the yaw rate, and $a_x$ is the longitudinal acceleration of vehicle $i$. The following functions describe the state evolution of the kinematic or dynamic bicycle models and expressions for both can be found in the Appendix. 
\begin{align}
    x_{\text{kin}}^{i,+} &= f_{\text{kin}}^i(x_{\text{kin}}^i, u_{\text{kin}}^i) \label{eq:frenet_kin_bike} \\
    x_{\text{dyn}}^{i,+} &= f_{\text{dyn}}^i(x_{\text{dyn}}^i, u_{\text{dyn}}^i) \label{eq:frenet_dyn_bike}
\end{align}
These can then be concatenated to construct the game dynamics in \eqref{eq:joint_dynamics}:
\begin{align}
    f_* = \text{vec}(f_*^1, \dots, f_*^M), \label{eq:frenet_joint_dynamics}
\end{align}
where the subscript $*\in\{\text{kin}, \text{dyn}\}$ indicates the vehicle model.
A Frenet-frame dynamic game with either the kinematic or dynamic bicycle model can therefore be defined as:
\begin{align} \label{eq:frenet_dynamic_game}
    \Gamma_* = (N, \mathcal{X}_*,  \mathcal{U}_*,  f_*, \{J_*^i\}_{i=1}^M,  C_*).
\end{align}

Direct expression of the vehicle kinematics in a Frenet reference frame offers two benefits in the formulation of racing tasks. The first is that it allows for the expression of progress maximization as a linear function of the state variable $s$. The second is that it allows for track boundary constraints to be imposed as simple bounds on the state variable $e_y$ \cite{rosolia2019learning,jung2023autonomous,zhu2023gaussian}. 
However, this comes at the cost of introducing singularities at the centers of curvature (as seen in the numerator of \eqref{eq:s_dot}) which can cause numerical instabilities especially on tracks with tight turns. Furthermore, Frenet-frame kinematics also complicate the expression of obstacle avoidance constraints in multi-agent settings, which will be seen in our description of the racing scenarios in Section~\ref{sec:results}. Our formulation, which applies ideas from contouring control \cite{liniger2015optimization,faulwasser2009model,lam2010model} in the context of dynamic games, approximates the evolution of $s$ and $e_y$ without introducing singularities, and results in a dynamic game with inertial kinematics, where the approximations of $s$ and $e_y$ can be leveraged in a similar manner as before for simple description of racing tasks. 

Let us first define the inertial-frame counterparts of the kinematic and dynamic bicycle models for vehicle $i$, which have as state and input vectors 
\begin{subequations}
    \begin{align}
        \bar{x}_{\text{kin}}^i &= [v, x, y, \psi]^\top, \in \bar{\mathcal{X}}_{\text{kin}}^i \\
        \bar{u}_{\text{kin}}^i &= [a, \delta_f]^\top \in \bar{\mathcal{U}}_{\text{kin}}^i.
    \end{align}
\end{subequations}
and
\begin{subequations}
    \begin{align}
        \bar{x}_{\text{dyn}}^i &= [v_x, v_y, \omega, x, y, \psi]^\top \in \bar{\mathcal{X}}_{\text{dyn}}^i, \\
        \bar{u}_{\text{dyn}}^i &= [a_x, \delta_f]^\top \in \bar{\mathcal{U}}_{\text{dyn}}^i,
    \end{align}
\end{subequations}
respectively. The dynamics functions describe the state evolution of the inertial-frame kinematic or dynamic bicycle models and expressions for both are again included in the Appendix. 
\begin{align}
    \bar{x}_{\text{kin}}^{i,+} &= \bar{f}_{\text{kin}}^i(\bar{x}_{\text{kin}}^i, \bar{u}_{\text{kin}}^i) \label{eq:inertial_kin_bike} \\
    \bar{x}_{\text{dyn}}^{i,+} &= \bar{f}_{\text{dyn}}^i(\bar{x}_{\text{dyn}}^i, \bar{u}_{\text{dyn}}^i) \label{eq:inertial_dyn_bike}
\end{align}
We begin the derivation of our approximation by augmenting the state and input vectors for vehicle $i$ with the variables $\bar{s}^i$ and $\bar{v}^i$ to obtain 
\begin{align}
    \hat{x}_*^i = (\bar{x}_*^i, \bar{s}^i) \in \hat{\mathcal{X}}_*^i, \ \hat{u}_*^i = (\bar{u}_*^i, \bar{v}^i) \in \hat{\mathcal{U}}_*^i, \label{eq:aug_state_input}
\end{align}
where we use the subscript $*\in\{\text{kin}, \text{dyn}\}$ to indicate that this augmentation may be performed on the state and input vectors for either the kinematic and dynamic bicycle models. The variable $\bar{s}^i$ represents our approximation of the progress $s$ for vehicle $i$ along a given path $\tau$ and evolves according to simple integrator dynamics: 
\begin{align}
    \bar{s}_{k+1}^i = \bar{s}_{k}^i + \Delta t \cdot \bar{v}_k^i, \label{eq:s_approx_dyanmics}
\end{align} 
with initial condition $\bar{s}_0^i = s(p_0^i)$, where $\bar{v}^i$ is a decision variable that can be thought of as an approximate arcspeed for agent $i$. We denote the augmented dynamics function as 
\begin{align}
    \hat{x}_*^{i,+} = \hat{f}_*^i(\hat{x}_*^i, \hat{u}_*^i), \label{eq:augmented_agent_dynamics}
\end{align}
which is defined as the concatenation of the original inertial-frame vehicle dynamics $\bar{f}_*^i$ in \eqref{eq:inertial_kin_bike} or \eqref{eq:inertial_dyn_bike} with \eqref{eq:s_approx_dyanmics}. Construction of the game dynamics $\hat{f}_*$ can then be done by simply concatenating the augmented dynamics functions $\hat{f}_*^i$ in \eqref{eq:augmented_agent_dynamics} for all vehicles:
\begin{align}
    \hat{f}_* = \text{vec}(\hat{f}_*^1, \dots, \hat{f}_*^M). \label{eq:augmented_joint_dynamics}
\end{align}
Now, given an inertial position $p^i$ and approximate progress $\bar{s}^i$, we define the lag error for vehicle $i$ as follows:
\begin{align}
    e_l(p^i, \bar{s}^i) &= [-\cos \Phi(\bar{s}^i), -\sin \Phi(\bar{s}^i)] (p^i - \tau(\bar{s}^i)). \label{eq:lag_error}
\end{align}
Specifically, the lag error approximates the difference between $\bar{s}^i$ and $s(p^i)$. We note that the complement of the lag error can be written as $e_c(p^i, \bar{s}^i) = [-\sin \Phi(\bar{s}^i), \cos \Phi(\bar{s}^i)] (p^i - \tau(\bar{s}^i))$. This term is known as the contouring error \cite{lam2010model} and it approximates the lateral deviation of the vehicle from the path at $\bar{s}^i$. It is straightforward to see through the illustration in Figure~\ref{fig:contouring_lag_errors} that when $\bar{s}$ is in a local neighborhood of $s(p)$ then $e_l \approx 0$ implies that $\bar{s} \approx s(p)$ and $e_c \approx e_y(p)$. However, if $\bar{s}$ is far from $s(p)$, as in the case of Figure~\ref{fig:poor_approx}, then it is entirely possible that the difference between $\bar{s}$ and $s(p)$ is arbitrarily large despite $e_l = 0$. Therefore, in order to for $\bar{s}$ to be a good approximation, we must not only drive $e_l$ to zero, but we must also attempt to keep $\bar{s}$ close to $s(p)$. This leads us to make the following two modifications to the game objectives and constraints.

First, to minimize the lag error, we introduce it into the augmented game objective for agent $i$ as follows: 
\begin{align} 
    \hat{J}_*^i(\hat{\mathbf{u}}_*^i, \hat{\mathbf{u}}_*^{\neg i}) = J_*^i(\hat{\mathbf{u}}_*^i, \hat{\mathbf{u}}_*^{\neg i}) + \sum_{k=1}^{N} q_l e_l(p_k^i(\bar{\mathbf{u}}_*^i), \bar{s}_k^i(\bar{\mathbf{v}}^i))^2, \label{eq:agent_cost_u_aug}
\end{align}
where $J^i_*$ is the original game objective for agent $i$ in \eqref{eq:frenet_dynamic_game}, $q_l \gg 0$ is a weight on the lag error cost, and $p_k^i(\cdot)$ and $\bar{s}_k^i(\cdot)$ denote functions which, given the input sequences $\bar{\mathbf{u}}^i$ and $\bar{\mathbf{v}}^i$, roll out the dynamics function $\hat{f}^i_*$ up to time step $k$ and return the $x$-$y$ position and approximate progress respectively. For brevity, we have omitted the explicit dependence of $\hat{J}^i_*$ on $\bar{s}_0^i$: the path progress corresponding to the state $\hat{x}_{*,0}^i$. This can be easily computed via \eqref{eq:frenet_s} as $\bar{s}_0^i = s(p_0^i)$. Furthermore, under a slight abuse of notation, we replace the arguments of the original Frenet-frame game objectives in \eqref{eq:frenet_dynamic_game} (i.e. the first term in \eqref{eq:agent_cost_u_aug}) with the augmented input sequences to reflect that any terms in $J^i_*$ which originally depended on $s^i$ and $s^{\neg i}$ should be replaced with the approximations $\bar{s}^i$ and $\bar{s}^{\neg i}$. Second, to keep $\bar{s}_k^i$ close to $s(p_k^i)$, we impose the simple bounds on the approximate arc speed input: $|\bar{v}_k^i| \leq v_{\text{max}}$. This helps in avoiding undesirable local minima of the lag error term in $\eqref{eq:agent_cost_u_aug}$ which correspond to poor approximations with $\bar{s}_k^i$ as shown in Figure~\ref{fig:poor_approx}. We denote the concatenation of the arcspeed bounds for all agents $i \in \{1, \dots, M\}$ and time steps $k \in \{1, \dots, N\}$ as $C_{v}(\bar{\mathbf{v}}^1, \dots, \bar{\mathbf{v}}^M)$.

Next, we turn to the track boundary constraints which can no longer be expressed as simple bounds on $e_y$. Instead, we may conveniently use the contouring error $e_c$ in Figure~\ref{fig:contouring_lag_errors} as an approximation to $e_y$ to define the following constraints for agent $i$. 
\begin{align}
    -w^-(\bar{s}_k^i(\bar{\mathbf{v}}^i)) \leq e_c(p_k^i(\bar{\mathbf{u}}_*^i),\bar{s}_k^i(\bar{\mathbf{v}}^i)) \leq w^+(\bar{s}_k^i(\bar{\mathbf{v}}^i)), \label{eq:contouring_track_boundary}
\end{align}
where $w^+, w^- : [0, L] \mapsto \mathbb{R}_{++}$ are functions which, for a value of $s$, return the distance from $\tau(s)$ to the track boundaries in the directions normal to $\tau$ at $s$. 

Finally, we define the augmented constraint function $\hat{C}_*$ as the concatenation of the original constraints of the dynamic game $C_*$ in \eqref{eq:frenet_dynamic_game} with the arc speed bounds:
\begin{align}
    \hat{C}_*(\hat{\mathbf{u}}^1, \dots, \hat{\mathbf{u}}^M) = \text{vec}(C_*(\hat{\mathbf{u}}^1, \dots, \hat{\mathbf{u}}^M), C_v(\bar{\mathbf{v}}^1, \dots, \bar{\mathbf{v}}^M)), \label{eq:augmented_constraints}
\end{align} 
where through a similar abuse of notation as before, we replace the arguments of the original Frenet-frame game constraints $C_*$ with the augmented input sequences to reflect that the original track boundary constraints in $e_y$ should be replaced with \eqref{eq:contouring_track_boundary}. Combining the elements discussed above, we define the following dynamic game:
\begin{align} \label{eq:approx_dynamic_game}
    \hat{\Gamma}_* = (N, \hat{\mathcal{X}}_*,  \hat{\mathcal{U}}_*,  \hat{f}_*, \{ \hat{J}_*^i\}_{i=1}^M,  \hat{C}_*),
\end{align}
which can be viewed as an approximation to the racing dynamic game $\Gamma_*$ in \eqref{eq:frenet_dynamic_game}, which uses the exact Frenet-frame kinematics when defining the game dynamics, agent objective, and constraint functions. Table~\ref{tab:dynamic_game_summary} summarizes the components of each of the exact and approximate dynamic games which will be solved in the following numerical studies.

\begin{table}[t]
\centering
\caption{Summary of Exact and Approximate Dynamic Games}
\begin{tabular}{c|c|c|c} 
& Dynamics & Objectives & Constraints \\
\midrule[1pt]
$\Gamma_{\text{kin}}$ & $f_\text{kin}$  \eqref{eq:frenet_kin_bike},\eqref{eq:frenet_joint_dynamics} & $\{J_\text{kin}\}_{i=1}^M$ \eqref{eq:agent_cost_u} & $C_\text{kin}$ \eqref{eq:joint_constraints} \\
$\Gamma_{\text{dyn}}$ & $f_\text{dyn}$  \eqref{eq:frenet_dyn_bike},\eqref{eq:frenet_joint_dynamics} & $\{J_\text{dyn}\}_{i=1}^M$ \eqref{eq:agent_cost_u} & $C_\text{dyn}$ \eqref{eq:joint_constraints} \\
$\hat{\Gamma}_{\text{kin}}$ & $\hat{f}_\text{kin}$ \eqref{eq:inertial_kin_bike},\eqref{eq:augmented_agent_dynamics},\eqref{eq:augmented_joint_dynamics} & $\{\hat{J}_\text{kin}\}_{i=1}^M$ \eqref{eq:agent_cost_u},\eqref{eq:agent_cost_u_aug} & $\hat{C}_\text{kin} \eqref{eq:joint_constraints},\eqref{eq:augmented_constraints}$ \\
$\hat{\Gamma}_{\text{dyn}}$ & $\hat{f}_\text{dyn}$ \eqref{eq:inertial_dyn_bike},\eqref{eq:augmented_agent_dynamics},\eqref{eq:augmented_joint_dynamics} & $\{\hat{J}_\text{dyn}\}_{i=1}^M$ \eqref{eq:agent_cost_u},\eqref{eq:agent_cost_u_aug} & $\hat{C}_\text{dyn} \eqref{eq:joint_constraints},\eqref{eq:augmented_constraints}$ \\
\end{tabular}
\label{tab:dynamic_game_summary}
\end{table}

\section{Numerical Study} \label{sec:results}

In the following numerical studies, we examine the performance of our DG-SQP solver in solving for \emph{open-loop} GNE of three head-to-head racing scenarios, which are defined in Part A. In Part B, we examine the effect of the proposed merit function and non-monotone line search strategy on the success rate of the DG-SQP solver via an ablation study on these components. In Part C, we investigate the effect of the proposed decaying regularization scheme through a sensitivity study on the regularization value and decay rate. In Part D, we compare the performance of our DG-SQP solver against the PATH solver from \cite{dirkse1995path} and show improved performance when solving both the exact and approximate dynamic games $\Gamma_*$ and $\hat{\Gamma}_*$. 
Our DG-SQP solver was implemented in Python and for the PATH solver, we use the Julia implementation \texttt{PATHSolver.jl} for comparison, but call it through a custom Python wrapper in order for us to easily pass identical definitions of the following dynamic games to it. Our implementation can be found at \texttt{https://github.com/zhu-edward/DGSQP}.

\subsection{Scenario Description}

We perform our numerical studies on three head-to-head racing scenarios which were constructed to showcase the performance of our DG-SQP solver with different vehicle models and track geometries.

\begin{figure}[t] 
    \centering
    \includegraphics[width=0.99\columnwidth]{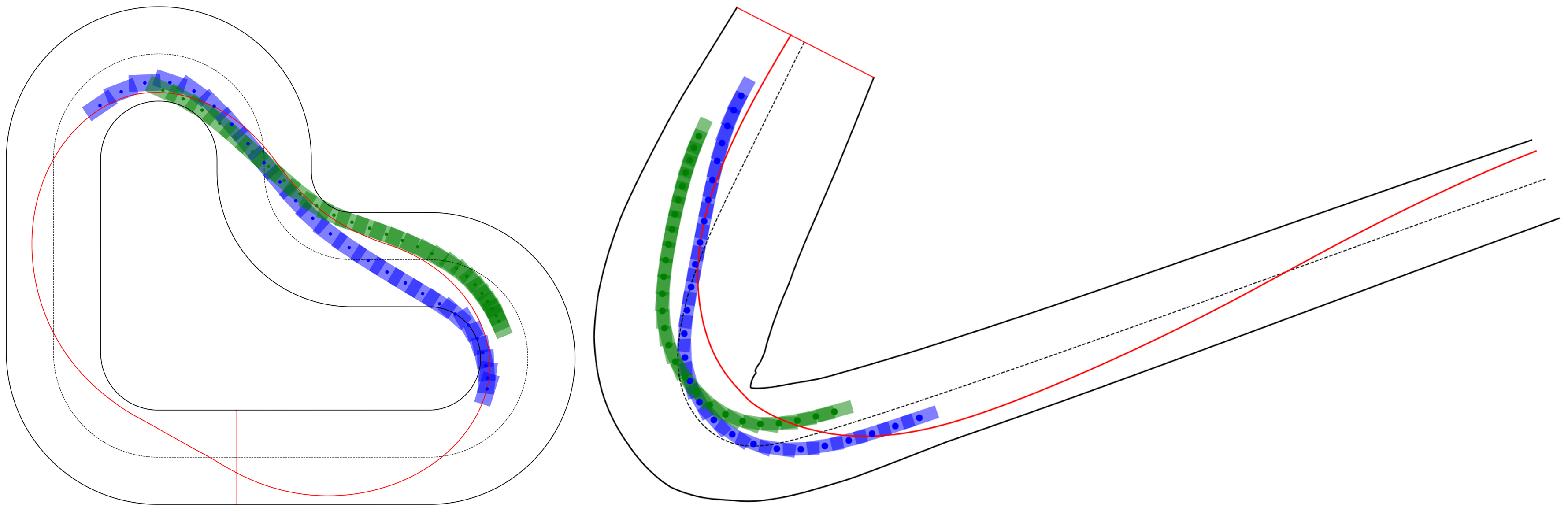}
    \caption{Example head-to-head racing open-loop GNE solutions with $N=25$ for the L-shaped track and 1/10th scale RC car (left) and the first hairpin turn of the Austin F1 track and a full scale race car (right). The red trajectories show pre-computed race lines which are used to warm start Scenarios 2 and 3 of our numerical study. In both plots, the cars are traveling in the counter-clockwise direction.}
    \label{fig:exemplary_gnes}
\end{figure}

\subsubsection{Scenario 1}

In this scenario, we use the L-shaped track shown in the left plot of Figure~\ref{fig:exemplary_gnes}, where the vehicles travel in the counter clockwise direction from the red start line. We model both vehicles using the kinematic bicycle model $f_{\text{kin}}^i$ with parameters that correspond to a 1/10th scale RC car. The discrete time dynamics are obtained via 4th order Runge-Kutta discretization with a time step of $\Delta t = 0.1s$. The Frenet-frame, or exact, dynamic game $\Gamma_{\text{kin}}$ is defined with the game dynamics $f_{\text{kin}} = \text{vec}(f_{\text{kin}}^1, f_{\text{kin}}^2)$ and objectives:
\begin{align*}
    J_{\text{kin}}^1(\mathbf{u}^1, \mathbf{u}^2) &= \sum_{k=0}^{N-1} \|u_k^1\|_{R^1}^2 + \|\Delta u_k^1\|_{P^1}^2 \\
    &\qquad + q_2^1 s_N^2(\mathbf{u}^2) -q_1^1 s_N^1(\mathbf{u}^1) \\
    J_{\text{kin}}^2(\mathbf{u}^2, \mathbf{u}^1) &= \sum_{k=0}^{N-1} \|u_k^2\|_{R^2}^2 + \|\Delta u_k^2\|_{P^2}^2 \\
    &\qquad + q_1^2 s_N^1(\mathbf{u}^1) -q_2^2 s_N^2(\mathbf{u}^2)
\end{align*}
where $\Delta u_k^i = u_k^i - u_{k-1}^i$, $u_{-1}^i$ is the input from the previous time step, $R^i, P^i \succeq 0$ are weights on the quadratic input and input rate penalties, $q_1^i, q_2^i > 0$ are weights on the competition costs, and $\|x\|_A^2 = x^\top A x$. The constraints $C_{\text{kin}}$ are defined as follows: 
\begin{subequations}
    \begin{align}
        u_l \leq u_k^i \leq u_u, \ i = 1, 2 \label{eq:input_limits} \\
        -w/2 \leq e_{y,k}^i(\mathbf{u}^i) \leq w/2, \ i = 1, 2 \label{eq:track_boundaries} \\
        \|p^1_k(\mathbf{u}^1) - p^2_k(\mathbf{u}^2)\|_2^2 \geq (r^1+r^2)^2, \label{eq:collision_avoidance}
    \end{align}
\end{subequations}
which consist of input limits \eqref{eq:input_limits}, track boundary constraints \eqref{eq:track_boundaries}, and collision avoidance constraints \eqref{eq:collision_avoidance}. In the above, $u_l$ and $u_u$ are the lower and upper bounds on the input, $w$ is the track width, and $r^1,r^2$ are the radii of the circular collision buffers for the two vehicles. When implementing the collision avoidance constraint for the exact dynamic game $\Gamma_\text{kin}$, the mapping $p^i_k(\mathbf{u}^i)$, which returns the inertial $x$-$y$ position of agent $i$ at time step $k$, is written as:
\begin{align*}
    p^i_k(\mathbf{u}^i) = \tau(s^i_k(\mathbf{u}^i)) + \frac{e_{y,k}^i(\mathbf{u}^i)}{\|\tau'(s^i_k(\mathbf{u}^i))\|_2} \begin{bmatrix}
        -\tau_y'(s^i_k(\mathbf{u}^i)) \\
        \tau_x'(s^i_k(\mathbf{u}^i))
    \end{bmatrix}.
\end{align*} 
For the approximate dynamic game $\hat{\Gamma}_\text{kin}$, we construct the game dynamics as $\hat{f}_{\text{kin}} = \text{vec}(\hat{f}_{\text{kin}}^1, \hat{f}_{\text{kin}}^2)$. The game objectives $J_\text{kin}^i$ are obtained by replacing any occurrences of $s_N^i(\cdot)$ with $\bar{s}_N^i(\cdot)$ in the expressions for $J_{\text{kin}}^i$ above. The game constraints are constructed as $\hat{C}_\text{kin} = \text{vec}(C_\text{kin}, C_v)$ where $e_{y,k}^i(\mathbf{u}^i)$ is replaced with $e_c(p_k^i(\bar{\mathbf{u}}^i),\bar{s}_k^i(\bar{\mathbf{v}}^i))$ in the original track boundary constraints \eqref{eq:track_boundaries} in $C_\text{kin}$. When implementing the collision avoidance constraint for the approximate dynamic game $\hat{\Gamma}_\text{kin}$, the mapping $p^i_k(\bar{\mathbf{u}}^i)$ simply extracts elements corresponding to $x$ and $y$ from the state vector of vehicle $i$ at time $k$. Finally, for this scenario, given an initial condition, we warm start the solver using trajectories obtained via a PID controller which attempt to maintain the speed $v$ and lateral deviation $e_y$ as specified in the initial condition.

\subsubsection{Scenario 2}

This scenario is identical to the previously described Scenario 1 in all aspects except for the game dynamics and warm start strategy, where instead, the vehicle is modeled using the dynamic bicycle model \cite{kong2015kinematic} and the tire forces are described via Pacejka tire models \cite{pacejka1992magic} with parameters that correspond to a 1/10th scale RC car. $\Gamma_\text{dyn}$ and $\hat{\Gamma}_\text{dyn}$ are therefore constructed in a similar manner as Scenario 1. Warm starting of the solvers is done with trajectories computed from an optimal control problem which attempts to track a pre-computed race line (the red line in the left plot of Figure~\ref{fig:exemplary_gnes}) for the two vehicles but \emph{does not} consider collision avoidance.

\subsubsection{Scenario 3}

In this scenario, we use a segment of the Austin F1 track corresponding to the first hairpin turn, as shown in the right plot of Figure~\ref{fig:exemplary_gnes}, and again model the vehicles using the dynamic bicycle model. Model parameters are chosen which match those of a full scale race car. The agent objectives in this scenario remain identical to Scenario 1 and the constraints are also the same, with the exception of the the track boundary constraints which, due to the variable track width, are now written as:
\begin{align*}
    -w_-(s_k^i(\mathbf{u}^i)) \leq e_{y,k}^i(\mathbf{u}^i) \leq w_+(s_k^i(\mathbf{u}^i)),
\end{align*}
where $w_-, w_+: [0,L] \mapsto \mathbb{R}_{++}$ are functions which return the width of the track in the directions normal to the parametric path at the given arclength $s$. We also use the same warm start strategy as in Scenario 2 with the race line shown in the right plot of Figure~\ref{fig:exemplary_gnes}.

\subsection{An Ablation Study on the Merit Function and Non-Monotone Line Search Strategy}

We first demonstrate the value of our proposed merit function and non-monotone line search strategy, by performing an ablation study on these components for the dynamic game described in Scenario 1 with a horizon of $N=25$. We compare convergence results from the full DG-SQP approach presented in Algorithm~\ref{alg:dynamic_game_sqp} with two ablative cases. In the first, we replace the merit function \eqref{eq:merit_function} with the sum of all of the agents' objectives: 
\begin{align}
    \phi(\mathbf{u}, s; \mu) = \sum_{i = 1}^{M}J^i(\mathbf{u}^1, \dots, \mathbf{u}^M) + \mu \|C(\mathbf{u})-s\|_1, \label{eq:ablation_merit}
\end{align}
which is a standard choice in nonlinear optimization approaches \cite{wright1999numerical}. In the second, we do not use the non-monotone line search strategy and instead employ a standard backtracking line search which enforces the merit decrease condition \eqref{eq:merit_decrease_condition} at every iteration. The results are shown in Table~\ref{tab:ablation_results} where GNE are solved for using DG-SQP and the two ablative cases. This is done over 100 randomly sampled joint initial conditions on the L-shaped track in Figure~\ref{fig:exemplary_gnes} where the vehicles start within 1.2 car lengths of each other and are traveling at velocities that exhibit at most a 25\% difference. We count the number of trials where the solvers reached convergence with thresholds $\rho_1,\rho_2,\rho_3 = 10^{-4}$. From the results, it can be clearly seen that certainly both the proposed merit function and the non-monotone line search strategy have a positive impact on the numerical robustness of the solver. In particular, we observe a 5.3\% and 62.8\% reduction in solver success rate when the two components are ablated. Comparing between the two components, it appears that the non-monotone line search has a larger impact on solver performance. We believe that this is due to the globalizing effect that the non-monotone scheme has on the solution process, which was shown in \cite{dirkse1995path} for the PATH solver. While we do not prove the same property for the DG-SQP solver, it is likely that the non-monotone scheme can have a similar effect.

\begin{table}[t]
\centering
\caption{Ablation study convergence results}
\begin{tabular}{c|c|c|c} 
 & no ablation & ablation merit function & ablation line search \\
\midrule[1pt]
Conv. & 94 & 89 & 35 \\
\end{tabular}
\label{tab:ablation_results}
\end{table}


\subsection{A Sensitivity Study on the Decaying Regularization Scheme}

\begin{figure}[t] 
    \centering
    \includegraphics[width=0.8\columnwidth]{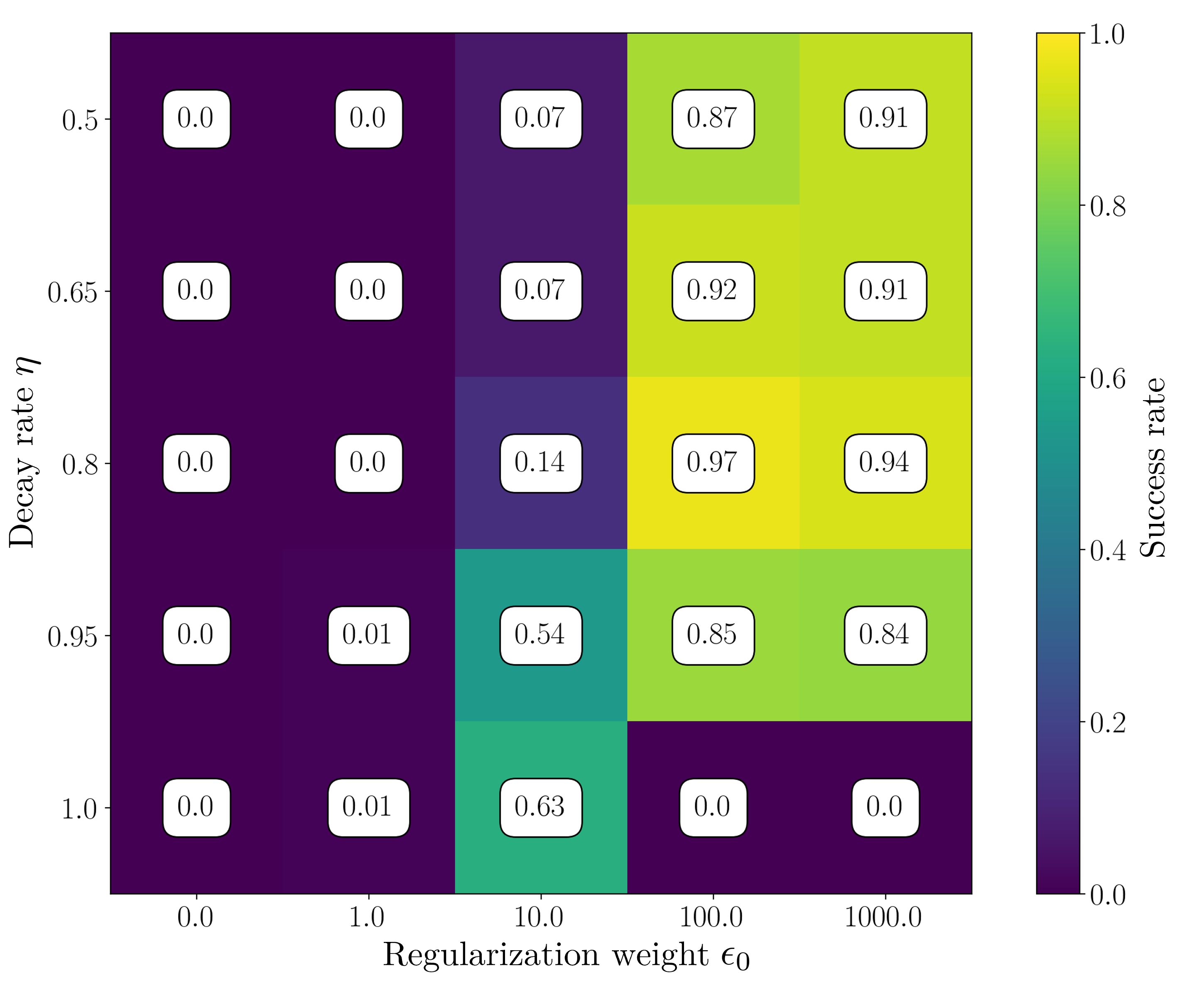}
    \vspace{-0.3cm}
    \caption{DG-SQP success rates for $\hat{\Gamma}_\text{kin}$ over different regularization weight ($x$-axis) and decay rate ($y$-axis) settings.}
    \label{fig:regularization_success}
\end{figure}

Next, we investigate the sensitivity of our approach to the regularization values and decay rates, by performing a sensitivity study over a grid of these values for the same two-player car racing dynamic game as the ablation study. In particular, we again randomly sample 100 joint initial conditions and count the number of trials where the solver reached convergence for values of $\epsilon_0 \in \{0, 0.1, 1, 10, 1000\}$ and $\eta \in \{0.5, 0.65, 0.8, 0.95, 1\}$. Note that $\epsilon_0 = 0$ corresponds to no regularization (in which case the value of the decay rate becomes irrelevant), and $\eta = 1$ corresponds to no regularization decay, which corresponds to standard regularization schemes with a constant regularizer. The results of this study are shown in Figure~\ref{fig:regularization_success}, where we may observe 1) the importance of regularization, as none of the trials reached convergence when no regularization is used (first column), and 2) the sensitivity of solver performance to the choice of a constant $\epsilon_0$, where the solver was only able to achieve a nontrivial success rate for a single setting of $\epsilon_0$ (last row). Next, by looking along the rows of Figure~\ref{fig:regularization_success}, it can be seen that decaying regularization helps to alleviate the sensitivity of solver convergence to the choice in $\epsilon_0$ as we observe nontrivial success rates over a range of magnitudes for $\epsilon_0$. Finally, by looking along the columns of Figure~\ref{fig:regularization_success}, we see that while some settings of decay rate can certainly lead to better success rates, it appears that a fairly large range of values are sufficient to stabilize the solution process to achieve high success rates of $>$80\%. While our analysis does not provide formal performance guarantees w.r.t. selection of $\epsilon_0$ and $\eta$, our sensitivity study suggests that large initial regularization values and slow decay are crucial to achieve high success rates for our solver.

\subsection{Solver and Model Comparison for Racing Tasks} \label{sec:solver_comparison}

The PATH solver \cite{dirkse1995path} is considered to be the state-of-the-art in solving for GNE of open-loop dynamic games and has been used effectively in many multi-agent navigation tasks \cite{liu2023learning,peters2023contingency}. However, these works have primarily demonstrated its performance on games with inertial kinematic models. We now compare its performance against our DG-SQP approach for the exact and approximate games $\Gamma_*$ and $\hat{\Gamma}_*$ in the head-to-head racing scenarios described above. 

\subsubsection{Scenario 1}

\begin{figure}[t] 
    \centering
    \includegraphics[width=0.99\columnwidth]{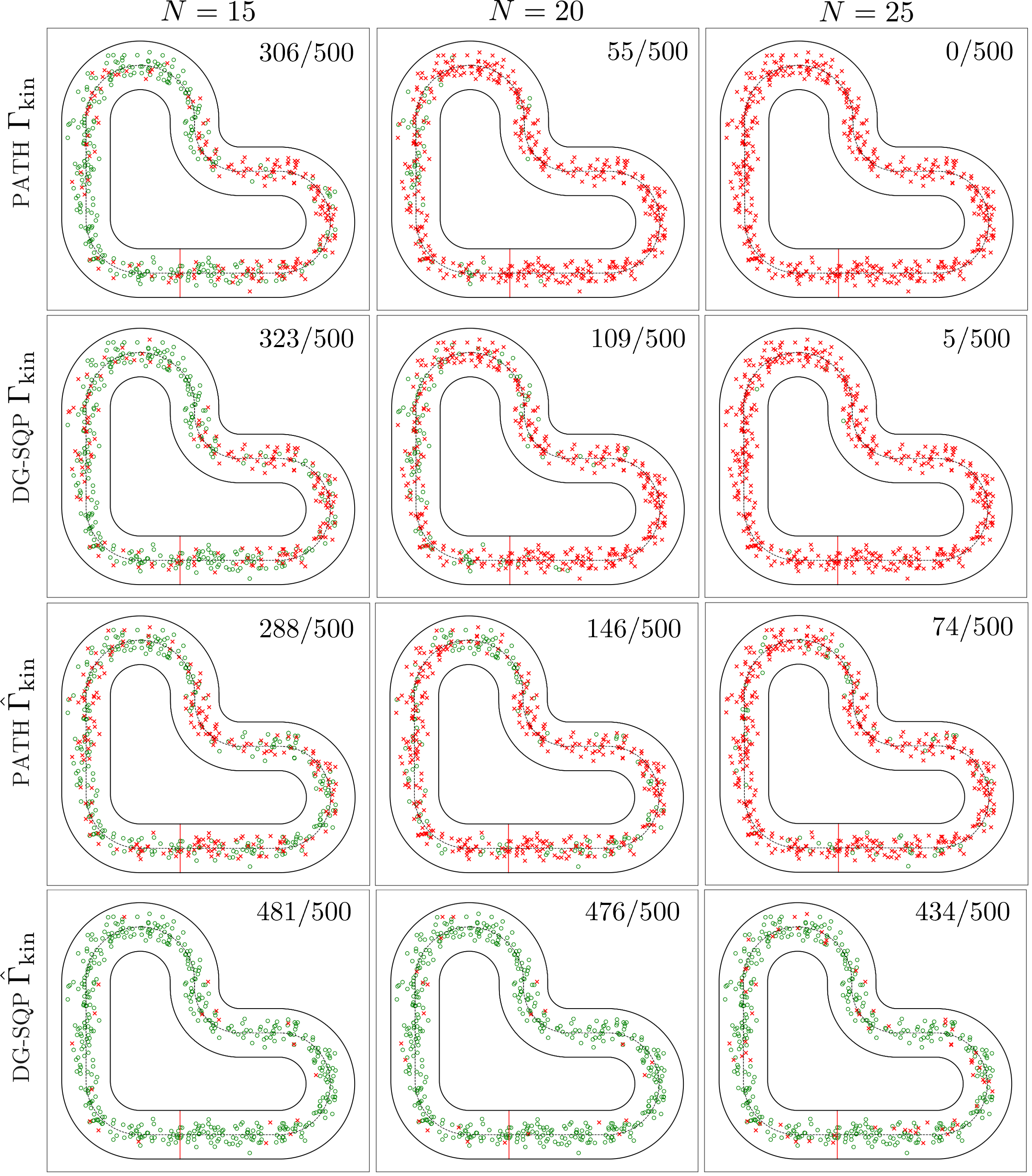}
    \caption{Results of the solver comparison study on the L-shaped track with the kinematic bicycle model from Scenario 1. Each point corresponds to the average of the sampled initial $x$-$y$ positions for the two agents. {\color{Green} $\circ$}, {\color{Red} $\times$} denote successful and failed trials respectively. The number in the top right of each plot shows the number of successful GNE solves out of the 500 sampled inital conditions.}
    \label{fig:compare_l_track_kinematic}
\end{figure}

For this scenario, we randomly sample 500 collision-free initial conditions where the agents start within 1.2 car lengths of each other and are traveling at velocities that exhibit at most a 25\% difference. For each initial condition, we solve for GNE of $\Gamma_{\text{kin}}$ and $\hat{\Gamma}_{\text{kin}}$ with horizon lengths of $N \in \{15, 20, 25\}$ using both the PATH and DG-SQP solvers. Convergence thresholds are again set to $\rho_1,\rho_2,\rho_3 = 10^{-4}$. The results are shown in Figure~\ref{fig:compare_l_track_kinematic} where we first observe that for the exact dynamic game $\Gamma_{\text{kin}}$, our DG-SQP solver shows better, performance when compared against the PATH solver, achieving higher success rates for all three horizon lengths. However, it is clear that both solvers struggle with finding a solution for longer horizon dynamic games. We note from the first column of Figure~\ref{fig:compare_l_track_kinematic}, that for the exact dynamic game $\Gamma_{\text{kin}}$, the majority of the failure cases for the PATH and DG-SQP solvers occur at the entrance of the chicane. Due to the many tight turns, this section of the track is where the numerical shortcomings of the Frenet-frame dynamics, as described in Section~\ref{sec:approximation}, can be especially apparent. Next, we examine the performance of the solvers on the approximate dynamic game $\hat{\Gamma}_{\text{kin}}$. Here, we observe significant improvements in success rate for the DG-SQP solver over all horizon lengths, with a success rate of over 85\% in the case of $N=25$. This is an $\sim$87x improvement when compared to the success rate of solving the exact dynamic game $\Gamma_{\text{kin}}$ with the same DG-SQP solver. 
When solving $\hat{\Gamma}_{\text{kin}}$ with the PATH solver, we see improvements over its performance with the exact dynamic game for the longer horizons of $N=20$ and $N=25$. However, we again see that DG-SQP achieves superior performance, with a $\sim$6x improvement compared to the PATH solver in the case of $N=25$.

\subsubsection{Scenario 2}

\begin{figure}[t] 
    \centering
    \includegraphics[width=0.99\columnwidth]{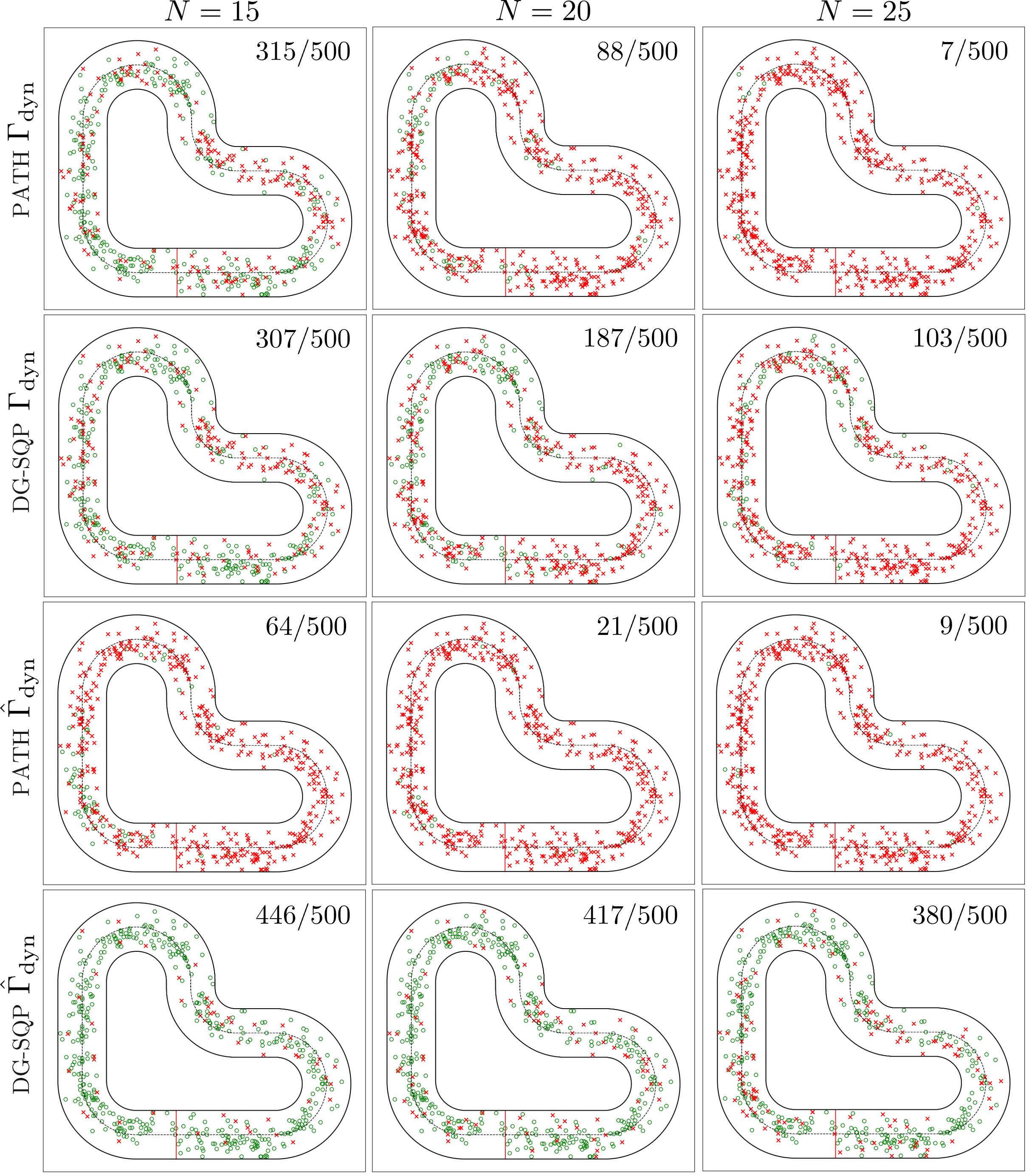}
    \caption{Results of the solver comparison study on the L-shaped track with the dynamic bicycle model from Scenario 2.}
    \label{fig:compare_l_track_dynamic}
\end{figure}

\begin{figure}[t] 
    \centering
    \includegraphics[width=0.9\columnwidth]{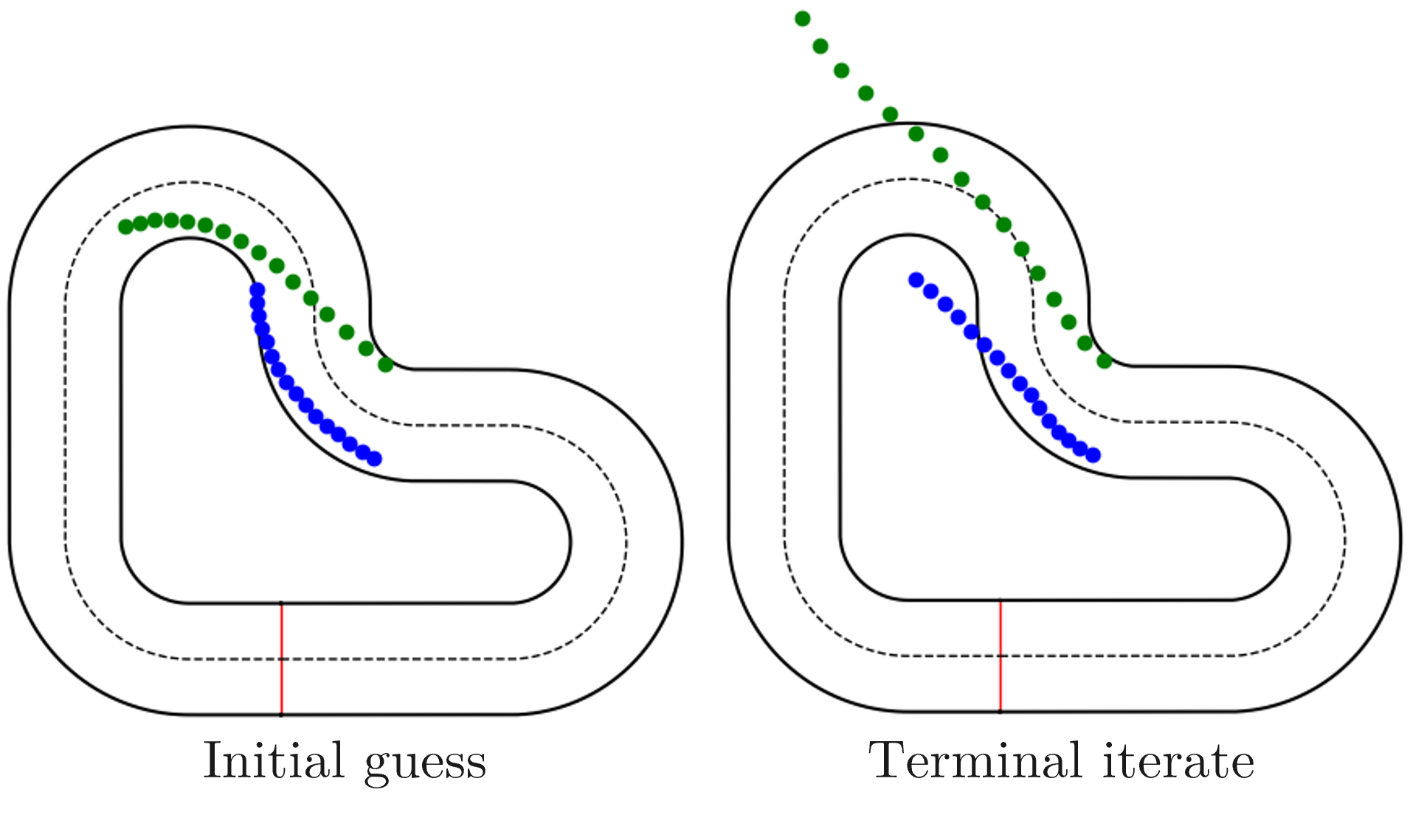}
    \caption{An example failure case by the PATH solver for the approximate game $\hat{\Gamma}_{\text{dyn}}$ from Scenario 2. The left plot shows the initial guess used to warm start the solver. The right plot shows the iterate returned by the PATH solver.}
    \label{fig:path_failure_cases}
\end{figure}


For this scenario, we randomly sample 500 collision-free initial conditions about a pre-computed race line (the red line in the left plot of Figure~\ref{fig:exemplary_gnes}) where the agents start within 1.2 car lengths of each other and at velocities which are within $\pm0.75$ m/s of the race line velocity at the sampled $s_0$. We compare the success rates of solving for GNE from these initial conditions for the exact and approximate dynamic games $\Gamma_{\text{dyn}}$ and $\hat{\Gamma}_{\text{dyn}}$. This is done again for the set of horizon lengths of $N \in \{15, 20, 25\}$. The results are summarized in Figure~\ref{fig:compare_l_track_dynamic}, where we observe similar trends to the results from Scenario 1 in the case of the exact dynamic game $\Gamma_{\text{dyn}}$, which shows that our DG-SQP solver achieves similar or superior performance to PATH for all three values of horizon lengths. Turning our attention now to the approximate dynamic game, we again see that our DG-SQP solver achieves significantly higher success rates when solving $\hat{\Gamma}_\text{dyn}$, with an over 3x improvement over the case of $\Gamma_{\text{dyn}}$ with DG-SQP and $N=25$. Interestingly, we observe a rather severe decrease in performance by the PATH solver when the dynamic bicycle model is used in the formulation of the approximate game. An example failure case is shown in Figure~\ref{fig:path_failure_cases}, where despite warm starting the solver with a feasible initial guess, the terminal iterate is infeasible. In contrast, a combination of the regularization scheme and the explicit enforcement of (linearized) constraints when computing the primal and dual step leads to  the success of the DG-SQP solver. This encourages small steps especially at the beginning of the solution process and helps to maintain the feasibility of the iterates. On the other hand, we note that the PATH solver does not have any explicit regularization scheme or constraint enforcement when computing the primal and dual step at each iterate, and instead only relies on a non-monotone line search scheme for dampening the iterates.


\subsubsection{Scenario 3}

\begin{figure}[t] 
    \centering
    \includegraphics[width=0.99\columnwidth]{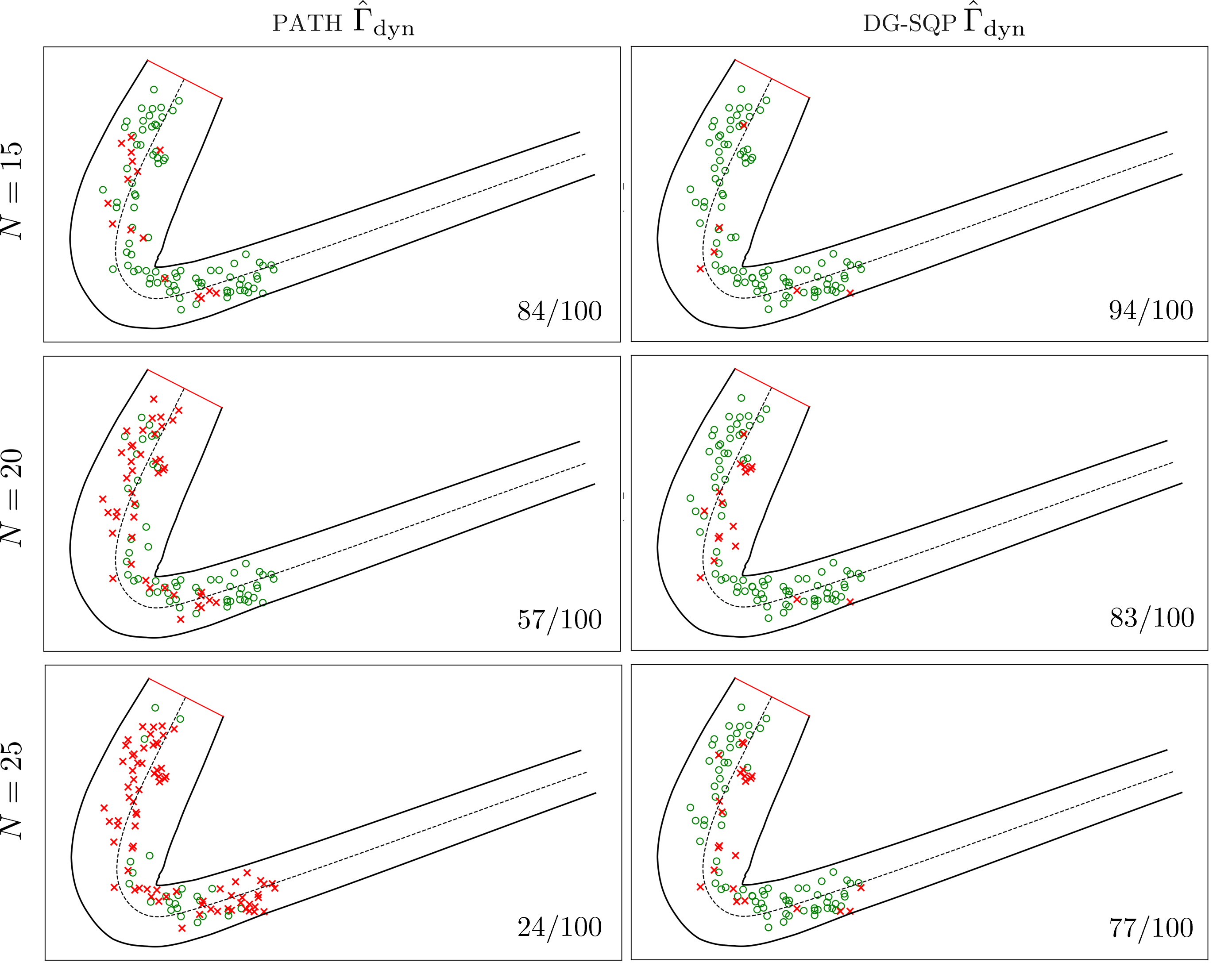}
    \caption{Results of the solver comparison study on the Austin F1 track from Scenario 3. The number in the bottom right of each plot shows the number of successful GNE solves out of the 100 sampled inital conditions.}
    \label{fig:compare_f1}
\end{figure}


Using a similar procedure to Scenario 2, we randomly sample 100 joint initial conditions about a precomputed raceline (the red line in the right plot of Figure~\ref{fig:exemplary_gnes}) where the agents start within 3 car lengths of each other and at velocities which are within $\pm7.5$ m/s of the race line velocity at the sampled $s_0$. The success rates of solving for GNE of the approximate dynamic game $\hat{\Gamma}_{\text{dyn}}$ from these initial conditions are shown in Figure~\ref{fig:compare_f1}, where we observe that, in the case of $N=25$, our DG-SQP solver is able to achieve a success rate of 77\% of the sampled initial conditions and out-performs the PATH solver by over 3x.


\subsection{Comparing GNE From the Exact and Approximate Dynamic Games}

\begin{figure}[t] 
    \centering
    \includegraphics[width=0.99\columnwidth]{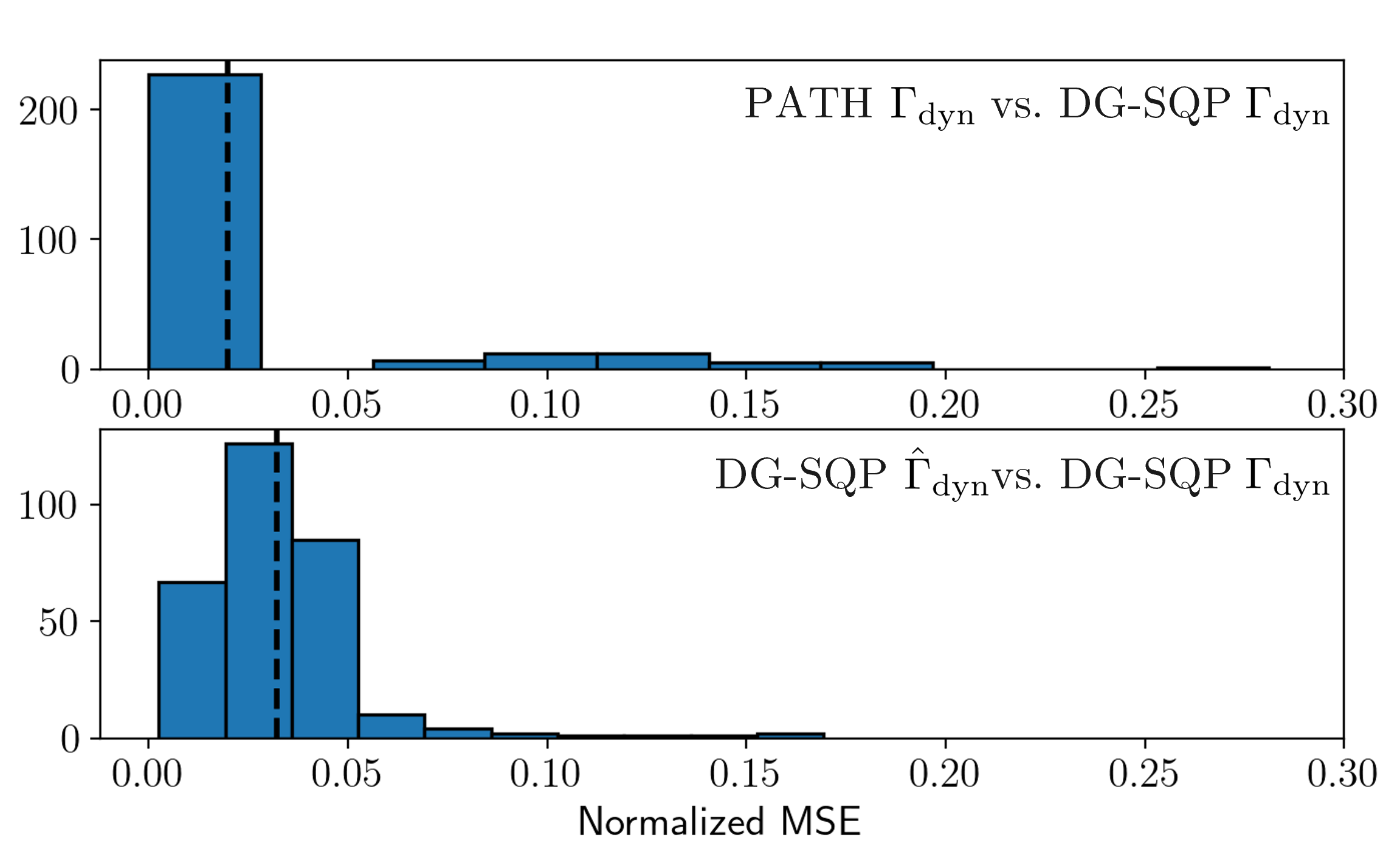}
    \caption{Distribution of the normalized mean squared error (MSE) between the GNEs from the three approaches. The dashed black line corresponds to the mean.}
    \label{fig:gne_mse_hist}
\end{figure}

We have just shown using three scenarios that significantly higher success rates can be achieved when solving for GNE of the approximate dynamic games $\hat{\Gamma}_*$ with the DG-SQP solver. However, it should be clear that as our approximation scheme modifies the components of the exact dynamic game $\Gamma_*$, we are therefore solving for the GNE of a related, but nevertheless different dynamic game. The question that naturally arises is then how the GNEs from the approximate dynamic game $\hat{\Gamma}_*$ compare with those from the exact one $\Gamma_*$. In order to answer this question, we now examine the solutions from Scenario 2 with $N=15$. Specifically, compute the pairwise mean squared error (MSE) between the GNE from two comparison cases. The first compares the GNE of  $\Gamma_\text{dyn}$ for the samples where both PATH and DG-SQP were successful. The normalized MSE for this case is defined as follows: 
\begin{align*}
\frac{1}{N}\sum_{k=0}^{N}\sum_{i=1}^{M}\|u_{DGSQP,k}^{i,\star} - u_{PATH,k}^{i,\star}\|_{D^{-1}}^2.
\end{align*}
The second case compares the GNE of $\Gamma_\text{dyn}$ and $\hat{\Gamma}_\text{dyn}$ from DG-SQP, where we only consider samples where DG-SQP was able to successfully solve for GNE of both the exact and approximate dynamic games. The normalized MSE for this case is defined as follows:
\begin{align*}
\frac{1}{N}\sum_{k=0}^{N}\sum_{i=1}^{M}\|\hat{u}_{DGSQP,k}^{i,\star}[1:2] - u_{DGSQP,k}^{i,\star}\|_{D^{-1}}^2,
\end{align*}
where $D = \text{diag}(u_u)$ normalizes the errors using the input upper bound $u_u$ from \eqref{eq:input_limits} and $\hat{u}_{DGSQP,k}^{i,\star}[1:2]$ denotes the first two elements of the GNE solution at time step $k$ (recall that $\hat{u}$ is the augmented input vector which includes the approximate arc speed $\bar{v}$). The distribution of these errors are shown in Figure~\ref{fig:gne_mse_hist}, where we may first observe from the top histogram that, when solving the exact Frenet-frame dynamic game, the DG-SQP and PATH solvers, for the most part, produce the same solutions with a minimum, median, and mean MSE of $1.57\times 10^{-6}$, $1.01\times 10^{-5}$, and $2.00\times 10^{-2}$  respectively. This serves as a baseline to illustrate the correctness of our approach. Looking at the bottom histogram, we see that although there are certainly outliers, the majority of the GNE from the approximate dynamic game are close to those from the exact dynamic game when the same solver is used. In this comparison case, we observe a minimum, median, and mean MSE of $2.54\times 10^{-3}$, $3.05\times 10^{-2}$, and $3.22\times 10^{-2}$  respectively.

We now turn to specific examples from the histograms to examine the qualitative differences in the GNE from the two comparison cases in Figures~\ref{fig:path_dgsqp_compare} and \ref{fig:dgsqpapp_dgsqp_compare}. In particular, we visualize the three GNE pairs with the smallest MSE (``Best" column), largest MSE (``Worst" column), and the three pairs with a MSE that is closest to the mean value (``Average" column). 

\begin{figure}[t] 
    \centering
    \includegraphics[width=0.99\columnwidth]{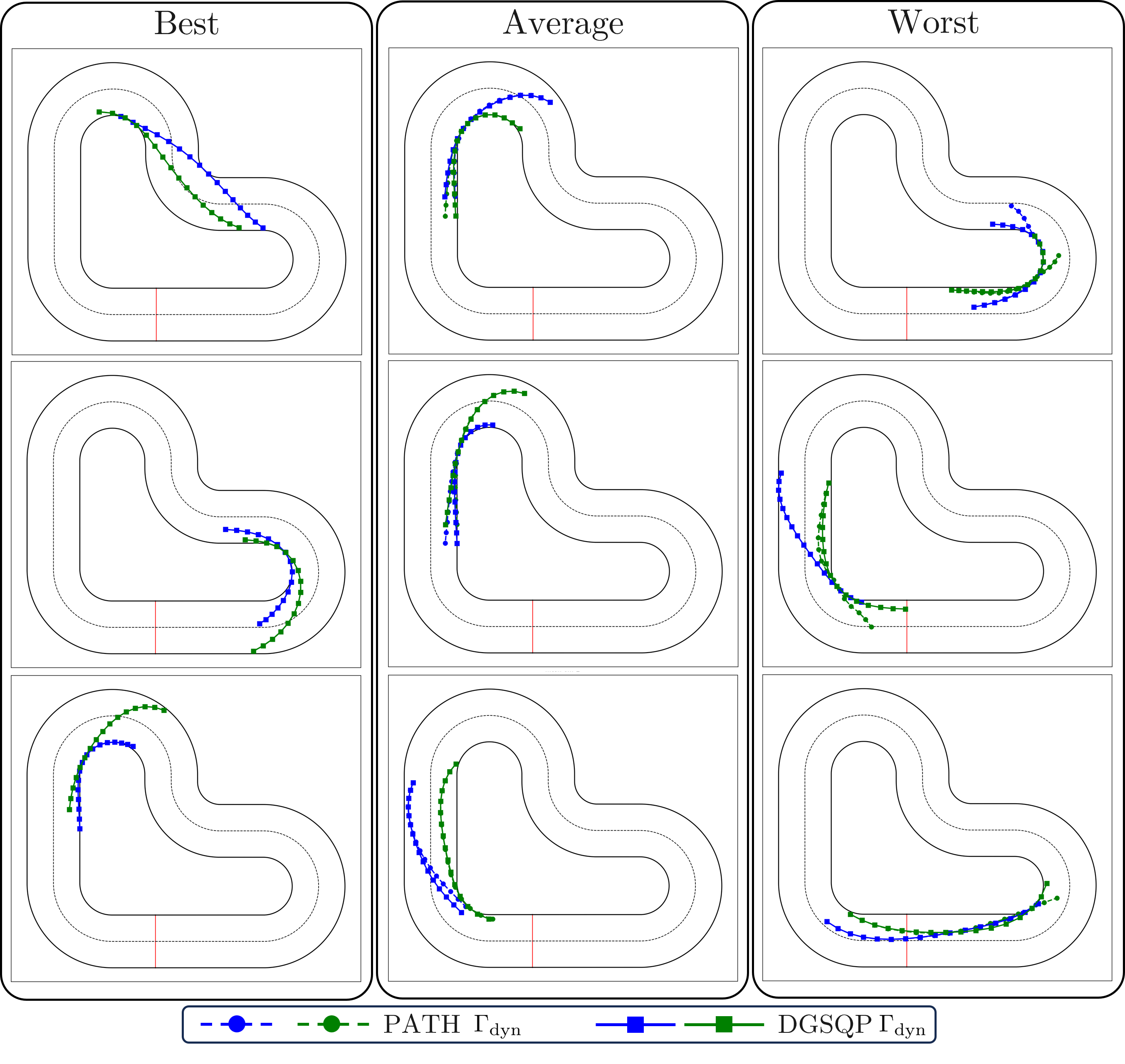}
    \caption{Comparison of GNEs of the exact dynamic game from the DG-SQP and PATH solvers. The blue and green traces represent the GNE trajectories of agents 1 and 2 respectively. The three plots in each column correspond to the top three cases in that category.}
    \label{fig:path_dgsqp_compare}
\end{figure}

In Figure~\ref{fig:path_dgsqp_compare}, we first compare the position traces arising from the GNE of the exact dynamic game $\Gamma_\text{dyn}$ from the DG-SQP and PATH solvers. In the ``Best" column, we  observe that the GNE from the two solvers are are identical. We note that in this comparison case, these examples are indicative of the majority of successful GNEs as evidenced by the minimum and median MSEs. In the ``Average" and ``Worst" columns, we observe that some differences arise especially towards the end of the game horizon. These instances correspond to the solvers finding different local GNE. However, we note that in all of the ``Average" and ``Worst" examples, the outcome, i.e. the relative ordering of the two vehicles at the end of the game horizon, is the same.


\begin{figure}[t] 
    \centering
    \includegraphics[width=0.99\columnwidth]{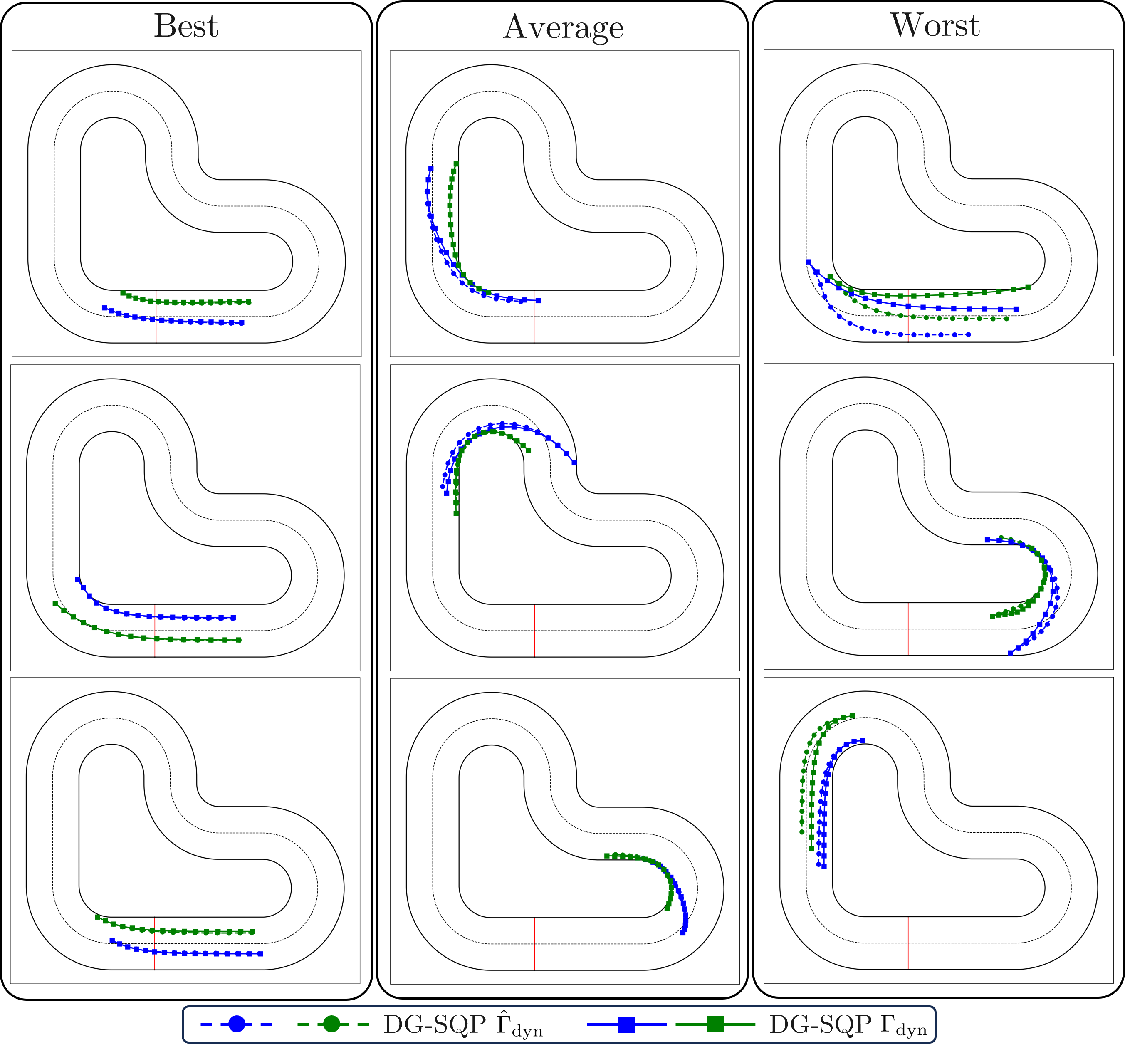}
    \caption{Comparison of GNEs of the approximate and exact dynamic games from the DG-SQP solver.}
    \label{fig:dgsqpapp_dgsqp_compare}
\end{figure}

In Figure \ref{fig:dgsqpapp_dgsqp_compare}, we compare GNE obtained with the DG-SQP solver for the exact and approximate dynamic games. We first observe that in the ``Best" column, the GNE of the approximate and exact dynamic games are identical. The fact that this occurs on a straight section of the track is unsurprising as this is where $\bar{s}$ can most accurately approximate the true path progress $s$ through the lag error in \eqref{eq:lag_error}. To see this, simply evaluate \eqref{eq:lag_error} with $\Phi = 0$, which leads to $e_l(p, \bar{s}) = \tau_x(\bar{s}) - x$. From the ``Average" column, we see that differences between the GNE are minor and that importantly, the GNE of the approximate dynamic game still capture the competitive nature of the racing scenario. Furthermore, we note that as in the previous comparison, the outcome remains the same despite small perturbations to the GNE trajectories between the exact and approximate dynamic games. This holds true even for the examples in the ``Worst" column despite larger discrepancies between the GNE of the exact and approximate dynamic games.

\section{Limitations and Future Work} \label{sec:limitations}

In this work, we have measured the performance of our DG-SQP solver against the state-of-the-art PATH solver using the metric of success rate and showed significant improvements in racing scenarios especially when approximate dynamics are used. Another metric which we have not discussed is that of solution time. This is especially important if we would like to use our DG-SQP solver in a real-time MPGP manner like in \cite{liu2023learning} and \cite{peters2023contingency}. However, we note that main shortcoming of our approach at this time is computational efficiency, with solutions taking up to two minutes for some of the successful trials with $N=25$ in Scenario 3. This is due to two reasons. The first is that our solver requires the solution of a sequence of constrained quadratic programs, which can be computationally demanding especially when compared to the PATH solver, which at its core, solves a sequence of linear systems. Secondly, our solver is implemented in Python, which is again significantly slower than the C++ implementation of the PATH solver. Though we can certainly implement our solver in a different language, we anticipate that it would still be difficult to achieve real-time solutions for the long-horizon racing problems presented here due to the solution of QPs. As such, an important direction of future work is to improve the computational efficiency of our DG-SQP solver, which would allow us to leverage its solutions for real-time racing experiments. One possible approach could be to improve the quality of the initial guess through supervised learning of GNE as in \cite{wang2020multi} and \cite{peters2022learning}. By warm starting the solver with the a neural network, which predicts GNE for a given joint initial condition, this would potentially reduce the number of solver iterations required to reach convergence. An alternative approach would be to leverage the numerical robustness of our DG-SQP solver to generate an extensive dataset of GNE from various initial conditions and track environments. This data could then be used to learn a game-theoretic value function which can be integrated in an optimal control scheme to aid in long-term strategic planning without the need for online solutions from our solver.

\section{Conclusions} \label{sec:conclusions}

In this work, we have presented DG-SQP, an SQP approach to the solution of generalized Nash equilibria for open-loop dynamic games. We show that the method exhibits local linear convergence in the neighborhood of GNE and present several practical improvements to the vanilla SQP algorithm including a non-monotone line search strategy with a novel merit function and decaying regularization scheme. We further present an approximation scheme to Frenet-frame dynamic games which can be used to improve the performance of our solver in racing scenarios. We conducted an extensive numerical study on various head-to-head racing scenarios with both kinematic and dynamic vehicle models and different race tracks. This study showed that our DG-SQP solver out-performs the state-of-the-art PATH solver in terms of success rate when applied to interactive racing scenarios.

\bibliographystyle{IEEEtran}
\bibliography{main.bib}

\begin{IEEEbiography}[{\includegraphics[width=1in,height=1.25in,clip,keepaspectratio]{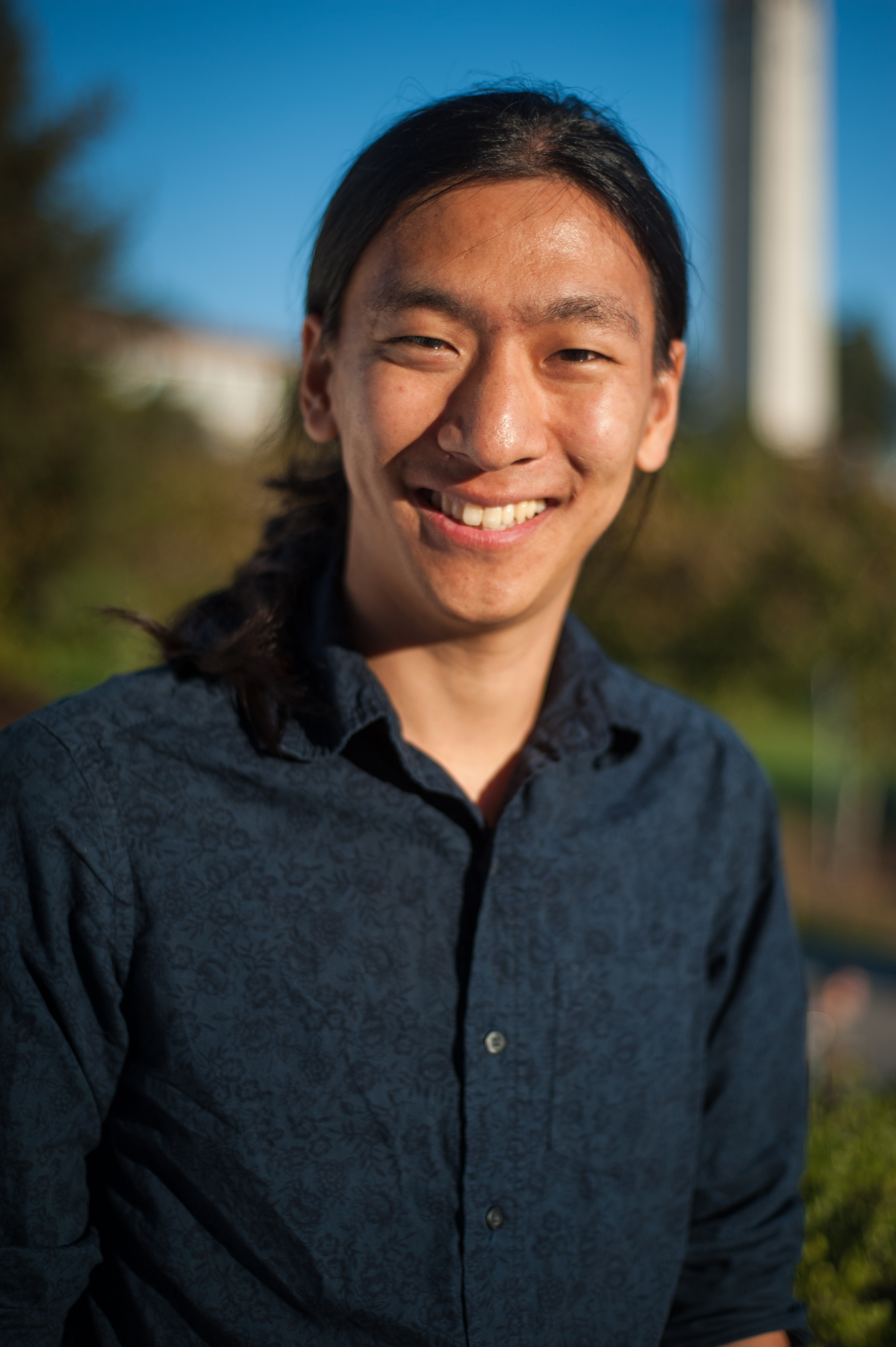}}]{Edward Zhu} received his B.S. in Mechanical Engineering from Villanova University, PA, USA, in 2015 and his M.S. in Mechanical Engineering from the University of California, Berkeley, CA, USA. He is currently a Ph.D. candidate in Mechanical Engineering at the University of California, Berkeley, CA, USA.

His research interests include leveraging model-based optimal control and game-theoretic methods for prediction and planning in multi-agent environments.
\end{IEEEbiography}

\begin{IEEEbiography}[{\includegraphics[width=1in,height=1.25in,clip,keepaspectratio]{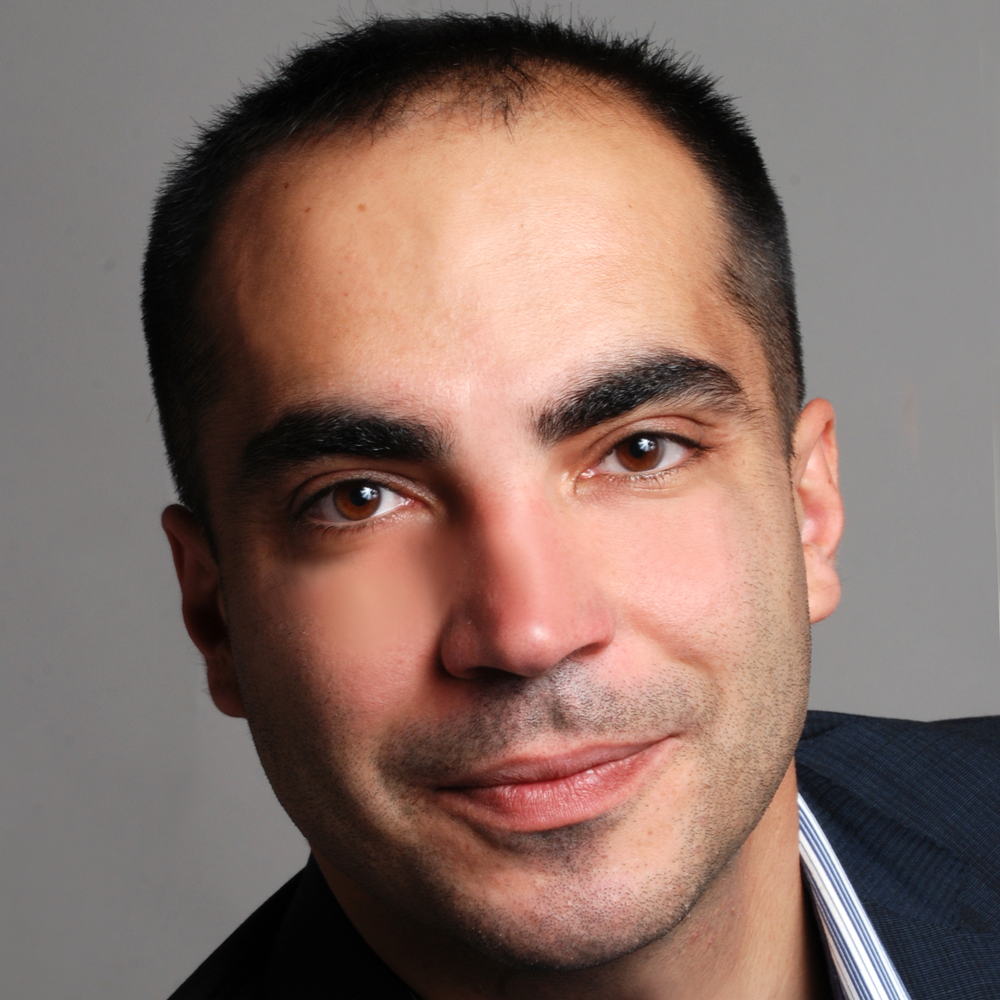}}]{Francesco Borrelli} received the Laurea degree
in computer science engineering from the University of Naples Federico II, Naples, Italy, in 1998, and the Ph.D. degree from ETH-Zurich, Zurich, Switzerland, in 2002.

He is currently an Associate Professor with the Department of Mechanical Engineering, University of California, Berkeley, CA, USA. He is the author of more than 100 publications in the field of predictive control and author of the book
Constrained Optimal Control of Linear and Hybrid Systems (Springer-Verlag). His research interests include constrained optimal control, model predictive control and its application to
advanced automotive control and energy efficient building operation.

Dr. Borrelli received the 2009 National Science Foundation CAREER
Award and the 2012 IEEE Control System Technology Award. In 2008,
he became the Chair of the IEEE Technical Committee on Automotive
Control.
\end{IEEEbiography}

\vfill

\appendix
Here, we describe the kinematic and dynamic bicycle models which are used to formulate the dynamic games in Section~\ref{sec:approximation}. 

The Frenet-frame kinematic bicycle model has the state and input vectors:
\begin{align*}
    x_{\text{kin}} = [v, s, e_y, e_\psi]^\top, \ u_{\text{kin}} = [a, \delta]^\top,
\end{align*}
and the continuous-time dynamics are written as:
\begin{align}
    \dot{v} &= a, \label{eq:s_dot} \\
    \dot{s} &= v\cos(e_\psi+\beta(\delta))/(1-e_y \kappa(s)), \nonumber \\
    \dot{e}_y &= v\sin(e_\psi+\beta(\delta)), \nonumber \\
    \dot{e}_\psi &= v\sin(\beta(\delta))/l^r - \kappa(s)v\cos(e_\psi+\beta(\delta))/(1-e_y \kappa(s)), \nonumber
\end{align}
where $\beta(\delta) = \arctan(\tan\delta\cdot l^f/(l^f+l^r))$ is the side slip angle, $l^f$, $l^r$ are the distance from the center of mass to the front and rear axles respectively, and $\kappa(s) = (\tau_x'(s)\tau_y''(s) - \tau_y'(s)\tau_x''(s))/(\tau_x'(s)^2+\tau_y'(s)^2)^{3/2}$ is the curvature of the path $\tau$ at a given $s$. 

The Frenet-frame dynamic bicycle model has the state and input vectors:
\begin{align*}
    x_{\text{dyn}} = [v_x, v_y, \omega, s, e_y, e_\psi]^\top, \ u_{\text{dyn}} = [a_x, \delta]^\top,
\end{align*}
and the continuous-time dynamics are written as:
\begin{align*}
    \dot{v}_x &= a_x - \frac{1}{m}F_y^f\sin\delta - c_d v_x + \omega v_y, \\
    \dot{v}_y &= \frac{1}{m}(F_y^r + F_y^f\cos\delta) - \omega v_x, \\
    \dot{\omega} &= \frac{1}{I_z}(l^f F_y^f \cos\delta - l^r F_y^r), \\
    \dot{s} &= (v_x \cos(e_\psi) - v_y \sin(e_\psi))/(1-e_y \kappa(s)), \\
    \dot{e}_y &= v_x \sin(e_\psi) + v_y \cos(e_\psi), \\
    \dot{e}_\psi &= \omega - \kappa(s)(v_x \cos(e_\psi) - v_y \sin(e_\psi))/(1-e_y \kappa(s)),
\end{align*}
where $m$ and $I_z$ are the mass and yaw moment of inertia of the vehicle and $c_d$ is an aerodynamic drag coefficient. The lateral tire forces are modeled using a simplified Pacejka tire model \cite{pacejka1992magic}:
\begin{align*}
    F_y^f &= D^f \sin (C^f + \arctan(B^f \alpha^f)) \\
    F_y^r &= D^r \sin (C^r + \arctan(B^r \alpha^r)),
\end{align*}
where $B$, $C$, and $D$ are parameters obtained experimentally and $\alpha^f$ and $\alpha^r$ are the slip angles of the front and rear tires respectively:
\begin{align*}
    \alpha^f &= -\arctan\left(\frac{\omega l^f + v_y}{v_x}\right) + \delta \\
    \alpha^r &= \arctan\left(\frac{\omega l^r - v_y}{v_x}\right).
\end{align*}

The inertial-frame kinematic bicycle model has the state and input vectors:
\begin{align}
    \bar{x}_{\text{kin}} = [v, x, y, \psi]^\top, \    \bar{u}_{\text{kin}} = [a, \delta]^\top,
\end{align}
and the continuous-time dynamics are written as:
\begin{align*}
    \dot{v} &= a, \\
    \dot{x} &= v\cos(\beta(\delta) + \psi), \\
    \dot{y} &= v\sin(\beta(\delta) + \psi), \\
    \dot{\psi} &= v\sin(\beta(\delta))/l^r, 
\end{align*}
    
The inertial-frame dynamic bicycle model has the state and input vectors:
\begin{align*}
    \bar{x}_{\text{dyn}} = [v_x, v_y, \omega, x, y, \psi]^\top, \    \bar{u}_{\text{dyn}} = [a_x, \delta]^\top,
\end{align*}
and the continuous-time dynamics are written as:
\begin{align*}
    \dot{v}_x &= a_x - \frac{1}{m}F_y^f\sin\delta - c_d v_x + \omega v_y, \\
    \dot{v}_y &= \frac{1}{m}(F_y^r + F_y^f\cos\delta) - \omega v_x, \\
    \dot{\omega} &= \frac{1}{I_z}(l^f F_y^f \cos\delta - l^r F_y^r), \\
    \dot{x} &= v_x\cos\psi - v_y\sin\psi, \\
    \dot{y} &= v_x\sin\psi + v_y\cos\psi, \\
    \dot{\psi} &= \omega.
\end{align*}
We obtain the discrete-time Frenet-frame dynamics $f_{\text{kin}}$, $f_{\text{dyn}}$, and inertial-frame dynamics $\bar{f}_{\text{kin}}$, $\bar{f}_{\text{dyn}}$ via 4-th order Runge-Kutta discretization with a time step of $\Delta t = 0.1s$.



\end{document}